\def\@ACM@checkaffil{
    \if@ACM@instpresent\else
    \ClassWarningNoLine{\@classname}{No institution present for an affiliation}%
    \fi
    \if@ACM@citypresent\else
    \ClassWarningNoLine{\@classname}{No city present for an affiliation}%
    \fi
    \if@ACM@countrypresent\else
        \ClassWarningNoLine{\@classname}{No country present for an affiliation}%
    \fi
}
\theoremstyle{plain}
\newtheorem{theorem}{Theorem}[section]
\newtheorem{proposition}[theorem]{Proposition}
\newtheorem{lemma}[theorem]{Lemma}
\theoremstyle{definition}
\newtheorem{definition}[theorem]{Definition}
\theoremstyle{remark}
\newcommand{\commentout}[1]{{}}
\newcommand{\wrt}{{\it w.r.t.}\xspace}
\newcommand{\ie}{{\it i.e.},\xspace}
\newcommand{\eg}{{\it e.g.},\xspace}
\def\header{\vspace{1mm} \noindent}
\newcommand{\algo}{SGOOD\xspace}
\newcommand{\energy}{Energy\xspace}
\newcommand{\msp}{MSP\xspace}
\newcommand{\odin}{ODIN\xspace}
\newcommand{\md}{MAH\xspace}
\newcommand{\graphde}{GraphDE\xspace}
\newcommand{\safe}{GNNSafe\xspace}
\newcommand{\good}{GOOD-D\xspace}
\newcommand{\ocgin}{OCGIN\xspace}
\newcommand{\glkd}{GLocalKD\xspace}
\newcommand{\ocgtl}{OCGTL\xspace}
\newcommand{\ngnn}{NGNN\xspace}
\newcommand{\gnnak}{$\text{GNN-AK}^{+}$\xspace}
\newcommand{\aagod}{AAGOD\xspace}
\newcommand{\enzymes}{ENZYMES\xspace}
\newcommand{\multi}{IMDB-M\xspace}
\newcommand{\bina}{IMDB-B\xspace}
\newcommand{\bace}{BACE\xspace}
\newcommand{\bbbp}{BBBP\xspace}
\newcommand{\drug}{DrugOOD\xspace}
\newcommand{\reddit}{REDDIT-12K\xspace}
\newcommand{\hiv}{HIV\xspace}
\newcommand{\auc}{AUROC\xspace}
\newcommand{\pr}{AUPR\xspace}
\newcommand{\fpr}{FPR95\xspace}
\newcommand{\acc}{ID ACC\xspace}
\newcommand{\Din}{D^{in}\xspace}
\newcommand{\Dtest}{D_{test}\xspace}
\newcommand{\SupGraph}{\mathcal{G}\xspace}
\newcommand{\subij}{g_{i,j}\xspace}
\begin{document}

\title{\algo: Substructure-enhanced Graph-Level Out-of-Distribution Detection}

\author{Zhihao Ding}
\email{tommy-zh.ding@connect.polyu.hk}
\orcid{0000-0001-7778-6142}
\affiliation{%
  \institution{The Hong Kong Polytechnic University}
}

\author{Jieming Shi}
\authornote{Corresponding Author.}
\email{jieming.shi@polyu.edu.hk}
\orcid{0000-0002-0465-1551}
\affiliation{%
  \institution{The Hong Kong Polytechnic University}
}

\author{Shiqi Shen}
\email{shiqishen@tencent.com}
\orcid{}
\affiliation{%
  \institution{Tencent Inc.}
}

\author{Xuequn Shang}
\email{shang@nwpu.edu.cn}
\orcid{}
\affiliation{%
  \institution{The Northwestern Polytechnical University}
}

\author{Jiannong Cao}
\email{csjcao@comp.polyu.edu.hk}
\orcid{0000-0002-2725-2529}
\affiliation{%
  \institution{The Hong Kong Polytechnic University}
}

\author{Zhipeng Wang}
\email{markrocwang@tencent.com}
\orcid{}
\affiliation{%
  \institution{Tencent Inc.}
}

\author{Zhi Gong}
\email{davidgong@tencent.com}
\orcid{}
\affiliation{%
  \institution{Tencent Inc.}
}

\renewcommand{\shortauthors}{Ding et al.}

\begin{abstract}
Graph-level representation learning is important in a wide range of applications. Existing graph-level models are generally built on i.i.d. assumption for both training and testing graphs. However, in an open world,  models can encounter out-of-distribution (OOD) testing graphs that are from different distributions unknown during training. 
A trustworthy model should be able to detect OOD graphs to avoid unreliable predictions, while producing accurate in-distribution (ID) predictions.
To achieve this, we present \algo, a novel graph-level OOD detection framework. We find that substructure differences commonly exist between ID and OOD graphs, and  design \algo with a series of techniques to encode task-agnostic substructures for effective OOD detection.
Specifically, we build a super graph of substructures for every graph, and develop a two-level graph encoding pipeline that works on both original graphs and super graphs to obtain substructure-enhanced graph representations. 
We then devise substructure-preserving graph augmentation techniques to further capture more substructure semantics of ID graphs.
Extensive experiments against 11 competitors on numerous graph datasets demonstrate the superiority of \algo, often surpassing existing methods by a significant margin.  The code is available at \url{https://anonymous.4open.science/r/SGOOD-0958}.
\end{abstract} 



\keywords{Out-of-distribution Detection, Trustworthy Model, Reliability, Graph Classification}


\maketitle

\section{Introduction}\label{sec:introduction}

Graphs are widely used to represent complex structured data, \eg chemical compounds, proteins, and social networks. 
Graph-level representation learning, which extracts meaningful representations of these graphs, is crucial for applications in biochemistry \cite{jiang2021could,rong2020self} and social network analysis \cite{dou2021user,shao2017efficient}.  
Existing graph-level learning models are based on the closed-world assumption, in which testing graphs encountered at deployment are drawn i.i.d. from the same distribution as the training graph data. 
However, in reality, the models are actually in an \textit{open world}, where test graphs can come from different, previously unseen distributions, making them out-of-distribution (OOD) \wrt in-distribution (ID) training graphs \cite{li2022ood,li2022graphde,yang2022learning}. 
Consequently, the models trained solely by ID graphs tend to make  incorrect  predictions on OOD data \citep{hendrycks2016baseline}, which raises reliability concerns in safety-critical applications, \eg drug discovery \citep{basile2019artificial}.
A trustworthy graph-level learning model should be capable of identifying OOD test graphs to avoid unreliable predictions.

While initial efforts have been made to explore graph-level OOD detection \cite{li2022graphde, liu2023good, guo2023data}, these methods primarily rely on message-passing graph neural networks (GNNs)~\cite{kipf2017semi,hamilton2017inductive} to first get node representations and then generate graph-level representations solely based on these nodes. Substructure patterns, which are high-level graph structures and contain rich graph semantics~\cite{zhang2021motif,yang2022learning}, have yet to be leveraged for graph-level OOD detection.
Intuitively, an OOD detector that can distinguish both node-level structures and substructure patterns would be more effective.
However, leveraging substructures for OOD detection is challenging due to the absence of OOD graphs during training, making it difficult to
determine in advance which substructure patterns should be learned for identifying OOD samples. 
Existing substructure learning methods, \eg hierarchical GNNs~\cite{ying2018hierarchical,lee2019self,gao2019graph} and subgraph GNNs~\cite{zhao2021stars,zhang2021nested}, typically learn \emph{task-specific} substructures that are tailored to discriminate between labeled classes in the training set. 
We argue that such task-specific substructures are not sufficient for OOD detection. Instead, the detector should be able to learn a diverse set of substructure patterns from ID training graphs, including \emph{task-agnostic} substructures not associated with specific classification tasks. 
As illustrated in Figure 1, the two ID graphs can be classified into different classes based on the presence of a task-specific triangle substructure (in blue). 
However, to identify the OOD graph, we need to compare its substructures to the task-agnostic ones such as the 6-node cycle (in green), to detect the irregular substructure (in orange). 
Therefore, effectively leveraging substructures for graph-level OOD detection requires considering task-agnostic information \cite{winkens2020contrastive}, a capability lacking in existing OOD detection methods.

\begin{table}[t!]

  \centering
\caption{The percentage of OOD graphs with substructures never appeared in ID graphs.}
\label{tab:ood}
\setlength{\tabcolsep}{2pt}
\vspace{-1mm}
\resizebox{0.4\textwidth}{!}{
    \begin{tabular}{ccccccc}
    \toprule
    Data & ENZYMES & IMDB-M & IMDB-B & BACE  & BBBP  & DrugOOD \\
    \midrule
    & 58.9\% & 14.0\% & 8.5\% & 50.0\% & 44.6\% & 77.3\% \\
    \bottomrule
    \end{tabular}
    }

  \vspace{-4mm}
\end{table} 

\begin{figure}
    \centering
    \includegraphics[width=0.7\columnwidth]{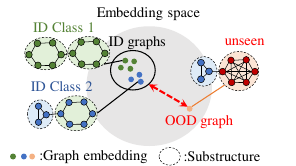}
    \vspace{-1mm}
    \caption{Substructure-enhanced graph-level OOD detection}
    \label{fig:illustration}
    \vspace{-5mm}
\end{figure}

In this paper, we develop \algo, a novel framework that {{explicitly} encodes task-agnostic substructures and their relationships} into effective representations for graph-level OOD detection. The design of \algo is supported  by empirical findings that demonstrate the crucial role of task-agnostic substructures in distinguishing between ID and OOD graphs. 
Given a dataset of graphs (see Table \ref{tab:datasets} for data statistics), we apply community detection ~\cite{clauset2004finding}
to extract task-agnostic substructures. The substructures are task-agnostic since the adopted community detection is independent of specific learning tasks. 
Then the percentage of OOD graphs with substructures that never appeared in ID graphs per dataset is reported in Table~\ref{tab:ood}. Observe that such percentage values are high, more than 44\% in 4 out of 6 datasets. The results validate that task-agnostic substructures can reveal differences between ID and OOD graphs.
As illustrated in Figure \ref{fig:illustration}, if a method can preserve the task-agnostic substructures of ID graphs into embeddings, OOD graphs with unseen substructures will have embeddings distant from those of ID graphs, making them easy to detect.

Therefore, we design a series of techniques in \algo to effectively encode task-agnostic substructures and generate  {substructure-enhanced graph representations} for graph-level OOD detection.
Specifically, we first build a {super graph} $\mathcal{G}_i$ {of substructures} for every graph $G_i$ to obtain task-agnostic substructures and their relationships. 
Then, a {two-level graph encoding pipeline} is designed to work on $G_i$ and $\mathcal{G}_i$ in sequence to learn expressive substructure-enhanced graph representations.
We prove that \algo with the pipeline is strictly more expressive than 1\&2-WL, which theoretically justifies the power of preserving substructure patterns for OOD detection.
{Moreover, to capture more information about task-agnostic substructures in training ID graphs,} we design {substructure-preserving graph augmentation} techniques, which utilize the super graph of substructures to ensure that the substructures in a graph are modified as a whole.
At test time, given a graph $G_i$ and its super graph $\mathcal{G}_i$, our OOD detector obtains the graph-level representations of both, which are then used for OOD score estimation.
Extensive experiments compare \algo with 11 baselines across 8 real-world graph datasets with various OOD types. \algo consistently outperforms existing methods, for example, achieving a 9.58\% absolute improvement in \auc over the runner-up baseline on the IMDB-M dataset.
In summary, our contributions are:
 \begin{itemize}[leftmargin=*]
\setlength\itemsep{1pt}
     \item We present \algo, a leading method that highlights the importance of task-agnostic substructures and effectively leverages them to enhance graph-level OOD detection.
     \item We design a two-level graph encoding pipeline by leveraging a super graph of substructures, empowering \algo to learn graph representations enhanced with substructures.
     \item  We further develop   substructure-preserving graph augmentations via super graphs of substructures, to  strengthen \algo's ability in distinguishing OOD graphs.
     \item Extensive experiments demonstrate the superiority of \algo for graph-level OOD detection, achieving significant improvements over existing methods across multiple datasets.
 \end{itemize}

\section{Preliminaries}

We consider graph-level classification, which aims to classify a collection of graphs into different classes. 
Let $G_i=(V_i, E_i)$ be a graph, where $V_i$ and   $E_i$  are node set and edge set, respectively. 
Let $\mathbf{x}_u \in \mathbb{R}^{c}$  denote the attribute vector of node $u \in V_i$ in graph $G_i$. 
Denote $\mathcal{X}$  as the in-distribution (ID) graph space and let   $\mathcal{Y}=\{1,2,...,C\}$ be the label space.
In   graph-level classification, the training set $\Din_{tr}=\{(G_i,y_i)\}^n_{i=1}$ is  drawn i.i.d. from the joint data distribution $P_{\mathcal{X}\mathcal{Y}}$. Every graph sample in $\Din_{tr}$ contains a graph $G_i$ with label $y_i$.
Let $f$ be a learning model trained by the training set $\Din_{tr}$, and $f$ is deployed to predict the label of a testing graph.

\header
\textbf{Graph-level Out-Of-Distribution Detection.} At test time, graph-level OOD detection can be treated as a task to decide whether a testing graph $G_{i} \text{ in testing set } \Dtest$  is from the ID $P_{\mathcal{X}}$ or from other irrelevant distributions (\ie OOD).
A typical way for OOD detection is to develop an OOD detector by leveraging the representations generated from the classification model $f$ that is trained via ID training graphs in $\Din_{tr}$. 
Specifically, the OOD detector has a scoring function $S(G_i)$ for every testing graph $G_i\in D_{test}$.
Testing graphs with low scores $S(G_i)$ are regarded as ID, while the graphs with high scores are OOD. As stated in \cite{ming2023exploit}, a score threshold $\lambda$ is typically set so that a high fraction of ID data (\eg 95\%) is correctly classified.

\begin{figure}[t!]
	\centering
		\includegraphics[width=0.48\textwidth]{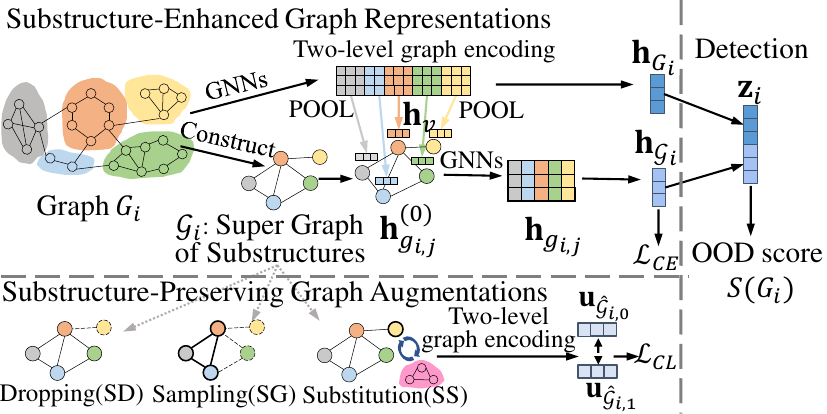}
  
  \vspace{-4mm}
 \caption{The \algo framework.}  
\label{fig:frame}   
\vspace{-5mm}
\end{figure}

\section{The \algo Method}\label{sec:method}

\header
\textbf{Solution Overview.}
The main goal of \algo is to effectively encode task-agnostic substructures and their relationships into representations for graph-level OOD detection. 
As illustrated in
Figure \ref{fig:frame}, \algo generates {substructure-enhanced graph representations} and further improves representation quality by {substructure-preserving graph augmentations}.
Given a graph $G_i$, {we first build its {super graph} $\SupGraph_i$ of task-agnostic substructures}, in which a super node represents a substructure in $G_i$ and edges connect super nodes by following the substructure connectivity in graph $G_i$.
A {two-level graph encoding pipeline} is designed over both $G_i$ and $\mathcal{G}_i$ for learning graph-level representations that are enhanced by substructures.
Furthermore, substructure-preserving graph augmentations are designed to preserve more information on task-agnostic substructures, making OOD graphs with unseen substructure patterns easier to detect.
Specifically, given a graph $G_i$, we augment it by first performing dropping, sampling, and substitution on its super graph $\SupGraph_i$ and then mapping the changes to  $G_i$ accordingly. 
In this way, the substructures in $G_i$ are modified as a whole. 
\algo is trained using a combination of classification loss $\mathcal{L}_{CE}$ and contrastive loss $\mathcal{L}_{CL}$.
In test time, given a test graph $G_i$, we first obtain graph-level representations of both $G_i$ and its super graph $\SupGraph_i$, concatenate and normalize the representations to compute the OOD score $S(G_i)$.

\subsection{Substructure-Enhanced Graph Encoding}\label{sec::substructure}

Given a graph $G_i$, we first describe how to get its super graph $\mathcal{G}_i$ of task-agnostic substructures, and then present a two-level graph encoding pipeline to generate substructure-enhanced graph representations. 

As \algo utilizes task-agnostic substructures, we treat substructure detection as a pre-processing step, and it is not our focus on how to detect substructures. There exist off-the-shelf methods \cite{dhillon2007weighted, cordasco2010community} to detect task-agnostic substructures. By default, we use modularity-based community detection \cite{clauset2004finding}. We also test other subgraph detection methods and find that the modularity-based substructures are effective in \algo, as validated in Table \ref{tab:community}.

\header
\textbf{Super Graph  of Substructures.}
Let a substructure $\subij$ of graph $G_i$ be a connected subgraph of $G_i$.
Specifically, a subgraph
$\subij=({V}_{i,j}, {E}_{i,j})$ is a substructure of $G_i=(V_i, E_i)$ iff $V_{i,j} \subseteq  {V_i}$, $E_{i,j} \subseteq E_i$, and  $\subij$ is connected.
The substructures $\{\subij\}_{j=1}^{n_{i}}$ of a graph $G_i$ satisfy the following properties: (i) the node sets of substructures are non-overlapping, (ii) the union of the nodes in all substructures is the node set of $G_i$, and (iii) every substructure is a  connected subgraph of $G_i$. 
Then we construct the super graph $\mathcal{G}_i$   by regarding every substructure $\subij$ as a super node in $\mathcal{G}_i$, and connect super nodes by inserting edges via Definition \ref{def:supgraph}.
Super graph $\mathcal{G}_{i}$ is a higher-order view depicting the relationships between the substructures of a graph $G$. We also add self-loops in super graph $\mathcal{G}_{i}$.

\begin{definition}[A Super Graph of Substructures] \label{def:supgraph}A super graph of substructures constructed from the input graph $G_{i}=(V_i,E_i)$ is denoted as $\mathcal{G}_i=(\mathcal{V}_{i}, \mathcal{E}_{i})$, where every super node $\subij$ in node set  $\mathcal{V}_{i} = \{\subij\}_{j=1}^{n_{i}}$ represents a substructure of $G_{i}$, and every edge in $\mathcal{E}_i$ connecting two super nodes, and the edge set $\mathcal{E}_{i}=\{(\subij,g_{i,k})|\exists u\in V_{i,j} \land v\in V_{i,k}, (u,v)\in E_i\}$.

\end{definition}

\header
\textbf{Two-level Graph Encoding.} Given a graph $G_i$ and its super graph $\mathcal{G}_i$, we present a two-level graph encoding pipeline, as shown in  Figure \ref{fig:frame}.
The idea is that, in addition to learning representations over $G_i$, we further utilize the super graph $\mathcal{G}_i$ to encode substructure information into graph representations, to better preserve distinguishable substructure patterns for effective OOD detection.
The two-level graph encoding first adopts GNNs to learn node representations with initial features over graph $G_i$. For every node $v\in V_i$, its representation $\mathbf{h}_v^{(l+1)}$ at $(l+1)$-layer is obtained by Eq. \eqref{eq:node_mp}.
{Different GNNs have different aggregation and combination functions $f_{\text{AGGR}}$, $f_{\text{COMB}}$.} By default, we adopt Graph Isomorphism Network (GIN) \cite{xu2018powerful} as the backbone.
The GIN for $G_i$ has $L_1$ layers. 
\begin{equation}\label{eq:node_mp} 
    \mathbf{h}^{(l+1)}_{v} = f_{\text{COMB}}^{(l+1)}(\mathbf{h}^{(l)}_{v}, f_{\text{AGGR}}^{(l+1)}(\mathbf{h}^{(l)}_{u}, u \in N_{G_i}(v))), 
\end{equation}
where $\mathbf{h}^{(l)}_{v} \in \mathbb{R}^{d}$ is the intermediate representation of node $v$ from the $l$-th layer GNNs with hidden dimension $d$, $f_{\text{AGGR}}^{(l+1)}$ is the function that aggregates node features from $v$'s neighborhood ${N}_{G_i}(v)$ in graph $G_i$, $f_{\text{COMB}}^{(l+1)}$ is the function that updates node $v$'s representation by combining the representations of its neighbors with its own, and initially $\mathbf{h}^{(0)}_{v}=\mathbf{x}_v$.

Next, we obtain the {representations of substructures} $\subij$ in $G_i$ by leveraging the node representations above. 
As shown in Eq.~\eqref{eq:node_pool}, given a node $v$, we first concatenate all representations $\mathbf{h}_v^{(l)}$ for  $l=1,...,L_1$ {using $f_{\text{CAT}}$}  to get  $\mathbf{h}_v$ that preserves multi-scale semantics.
Then,  for a substructure $\subij$ of graph $G_i$, we obtain the substructure representation $\mathbf{h}^{(0)}_{\subij}$  by integrating  $\mathbf{h}_v$ of all $v$ in $\subij$ via DeepSet pooling \cite{zhang2019deep} $f_{\text{POOL}}$ in Eq. \eqref{eq:node_pool}. 
\begin{equation}\label{eq:node_pool} 
     \mathbf{h}^{(0)}_{\subij} = f_{\text{POOL}}(\{\mathbf{h}_{v}|v\in {V}_{i,j} \}), \mathbf{h}_{v} = f_{\text{CAT}}(\{\mathbf{h}^{(l)}_{v}\}_{l=1}^{L_1})
\end{equation}
Note that $\mathbf{h}^{(0)}_{\subij}$ only considers the nodes inside substructure $\subij$ and the original graph topology $G_i$.
To further consider the relationships depicted in the super graph $\mathcal{G}_i$ of substructures, we regard $\mathbf{h}^{(0)}_{\subij}$ as the initial features of super node $\subij$ in $\mathcal{G}_i$, and employ a $L_2$-layer GIN over $\mathcal{G}_i$  to learn substructure-enhanced graph representations by Eq. \eqref{eq:sub_mp} and \eqref{eq:sub_pool}.
\begin{equation}\label{eq:sub_mp}
    \mathbf{h}^{(l+1)}_{g_{i,j}} = f_{\text{COMB}}^{(l+1)}(\mathbf{h}^{(l)}_{g_{i,j}}, f_{\text{AGGR}}^{(l+1)}(\mathbf{h}^{(l)}_{g_{i,k}}, g_{i,k} \in {N}_{\mathcal{G}_i}(g_{i,j}))),  
\end{equation}
where ${N}_{\mathcal{G}_i}(g_{i,j})$ is the neighbors of super node $\subij$ in $\mathcal{G}_i$.

Lastly,  in Eq. \eqref{eq:sub_pool}, we get the final representation $\mathbf{h}_{\subij}$ of every super node $\subij$ by concatenating the representation of $\subij$ in every layer of the $L_2$-layer GIN, and obtain the graph representation $\mathbf{h}_{\mathcal{G}_i}$ by readout  $f_{\text{OUT}}$ that is sum pooling.   
\begin{equation}\label{eq:sub_pool}
    \mathbf{h}_{\mathcal{G}_i} = f_{\text{OUT}}(\{\mathbf{h}_{g_{i,j}}|g_{i,j}\in \mathcal{V}_{i} \}), 
\mathbf{h}_{g_{i,j}} = f_{\text{CAT}}(\{\mathbf{h}^{(l)}_{g_{i,j}}\}_{l=0}^{L_2})
\end{equation}

Remark that the representation  $\mathbf{h}_{\mathcal{G}_i}$ of super graph $\mathcal{G}_i$ of graph $G_i$ is used  to train the loss $\mathcal{L}_{CE}$ for classification.
Meanwhile, as explained shortly, for OOD detection during testing, we further consider another representation of graph $G_i$ obtained by aggregating node representations. 
Existing studies, such as hierarchical pooling \cite{ying2018hierarchical,lee2019self,gao2019graph} and subgraph GNNs \cite{zhang2021nested,zhao2021stars}, learn task-specific substructures for the graph classification task.
On the other hand, we leverage task-agnostic substructures.
We conduct experiments to demonstrate that our \algo is more effective than these methods for graph-level OOD detection.

\subsection{Substructure-Preserving  Augmentations}\label{sec::augmentations} 
{Intuitively, if more information about task-agnostic substructures in training ID graphs is preserved, it is easier to distinguish OOD graphs with unseen substructure patterns.}
To achieve this, we design substructure-preserving graph augmentations by leveraging the super graph $\mathcal{G}_i$ of graph $G_i$, to improve the performance further.
However, it is challenging to achieve this. 
Substructures with subtle differences have different semantics. It is important to keep the substructures of a graph intact while performing augmentations. 
Common augmentation techniques like edge permutation and node dropping directly on graphs $G_i$ may unexpectedly destroy meaningful substructures, and hamper OOD detection effectiveness. 

To tackle the issue, we first perform augmentations on the super graph  $\mathcal{G}_i$ by regarding substructures as atomic nodes, and then map the augmentations to the original graph $G_i$ with modifications over substructures as a whole.
Specifically, we propose three substructure-level graph augmentations below, namely \textit{substructure dropping (SD), super graph sampling (SG), and substructure substitution (SS).} 
The default augmentation ratio is set to 0.3.
\vspace{-1mm}
\begin{itemize}[leftmargin=*]  \setlength\itemsep{0em}
    \item \textit{Substructure Dropping (SD).} Given a graph $G_i$ with its super graph $\mathcal{G}_i$, a fraction of super nodes in $\mathcal{G}_i$ (\ie the corresponding substructures in $G_i$) are discarded randomly. Remark that selected substructures are dropped as a whole.
    \item \textit{Super Graph Sampling (SG).} In the super graph $\mathcal{G}_i$, we start from a random node, sample a fixed-size subgraph in $\mathcal{G}_i$, and drop the rest nodes and edges. The changes are mapped to $G_i$ accordingly. Depth-first search is chosen as the sampling strategy \cite{you2020graph}.
    \item \textit{Substructure Substitution (SS).} Given a graph $G_i$ in class $c$ with super graph $\mathcal{G}_i$ of substructures, we randomly substitute a fraction of nodes in $\mathcal{G}_i$ (\ie substructures in $G_i$) with other substructures from the graphs of the same class $c$. To avoid drastic semantic change of the whole graph, only super nodes with degree one (excluding self-loops) in $\mathcal{G}_i$ take part in the substitution. 
\end{itemize}

\subsection{Model Training and OOD Scoring}\label{sec::training}
\subsubsection{Two-stage model training}
We adopt a cross-entropy loss $\mathcal{L}_{CE}$ for classification. 
After getting representation $\mathbf{h}_{\mathcal{G}_i}$ for the super graph $\mathcal{G}_i$ of graph $G_i$, we apply a linear transformation to get prediction logits 
$\widehat{y}_{i}$, evaluated against the ground-truth class label $y_i$ to get $\mathcal{L}_{CE}$ by Eq. \eqref{eq:lossce} for a mini-batch of $B$ training graphs.
\begin{equation}\label{eq:lossce}
\mathcal{L}_{CE}=-\frac{1}{B} \textstyle\sum_{i=1}^{B} \textstyle\sum_{c=1}^{C} \mathbbm{1}(y_{i}=c) \log \left(\hat{y}_{i,c}\right)
\end{equation}
Then we adopt the substructure-preserving graph augmentations in Section \ref{sec::augmentations} to get contrastive loss  $\mathcal{L}_{CL}$.
Specifically, given a mini-batch of $B$ training graphs $\{G_{i}\}_{i=1}^{B}$ and their super graphs $\{\mathcal{G}_{i}\}_{i=1}^{B}$, we transform the super graphs to get $\widehat{\mathcal{G}}_{i,0}=\mathcal{T}_{0}(\mathcal{G}_{i})$ and  $\widehat{\mathcal{G}}_{i,1} =\mathcal{T}_{1}(\mathcal{G}_{i})$, where $\mathcal{T}_{0}$ and $\mathcal{T}_{1}$ are two augmentations chosen among $\mathcal{A}=\{$\textit{{I}, {SD}, {SG}, {SS}}$\}$, where \textit{I} indicates no augmentation. 
Graph $G_{i}$ is transformed accordingly via $\mathcal{T}_{0}$ and $\mathcal{T}_{1}$ to obtain $\widehat{G}_{i,0}$ and $\widehat{G}_{i,1}$ respectively.  Then, the representations $\mathbf{h}_{\widehat{\mathcal{G}}_{i,0}}$ and $\mathbf{h}_{\widehat{\mathcal{G}}_{i,1}}$ of the two augmented super graphs can be calculated by applying Eq.\eqref{eq:node_mp}-\eqref{eq:sub_pool}. 
We transform $\mathbf{h}_{\widehat{\mathcal{G}}_{i,0}}$ and $\mathbf{h}_{\widehat{\mathcal{G}}_{i,1}}$ by  a shared projection head  $\psi(\cdot)$, which is  a 2-layer MLP, followed by $l_{2}$-normalization,  to obtain $\mathbf{u}_{\widehat{\mathcal{G}}_{i,0}} = \psi(\mathbf{h}_{\widehat{\mathcal{G}}_{i,0}})/||\psi(\mathbf{h}_{\widehat{\mathcal{G}}_{i,0}})||$ and $\mathbf{u}_{\widehat{\mathcal{G}}_{i,1}} = \psi(\mathbf{h}_{\widehat{\mathcal{G}}_{i,1}})/||\psi(\mathbf{h}_{\widehat{\mathcal{G}}_{i,1}})||$, respectively.  We get $\mathcal{L}_{CL}$ by 
\begin{equation}\label{eq:lossCL}
\small
    \mathcal{L}_{CL} = \frac{1}{2B}\sum_{i=1}^{B}\sum_{a\in\{0,1\}} -\log\frac{\exp{(\mathbf{u}_{\widehat{\mathcal{G}}_{i,a}}^{\intercal}\mathbf{u}_{\widehat{\mathcal{G}}_{i,1-a}}/\tau)}}{\sum_{j=1}^{B}\mathbbm{1}(j\neq i)\sum_{k\in\{0,1\}}\exp{(\mathbf{u}_{\widehat{\mathcal{G}}_{i,a}}^{\intercal}\mathbf{u}_{\widehat{\mathcal{G}}_{j,1-k}}/\tau)}}, 
\end{equation}
where $\tau$ is a temperature parameter.

The overall training loss is the combination of $\mathcal{L}_{CE}$ and $\mathcal{L}_{CL}$, weighted by $\alpha$:
\begin{equation}
\label{eq:loss}
\mathcal{L}=\mathcal{L}_{CE} + \alpha\mathcal{L}_{CL}.
\end{equation} 
The training procedure of \algo consists of two stages. In the first pre-training stage, the parameters are {solely} updated by minimizing $\mathcal{L}_{CL}$ for $T_{PT}$ epochs. In the second stage, the parameters are fine-tuned under the {combined} overall loss $\mathcal{L}$ for $T_{FT}$ epochs. 
This training procedure achieves better performance than directly training $\mathcal{L}$, as shown  in Figure~\ref{fig:epochs}  {when pretraining $T_{PT}$ is 0.}

\subsubsection{Graph-level OOD  scoring}\label{sec::detection}
Recall that the main goal of \algo is to let the representations of ID data and OOD data to be distant from each other. Given a testing graph $G_{i}\in D_{test}$, we use the standard Mahalanobis distance \cite{lee2018simple}
to quantify its OOD score.  If $G_i$ is with large  Mahalanobis distance from the ID training data, it  tends to be OOD. 

As shown in Figure \ref{fig:frame}, in addition to the representation $\mathbf{h}_{\mathcal{G}_i}$ of the super graph $\mathcal{G}_i$ of a testing graph $G_i$, \algo also aggregates the node representations of $G_i$ to get
$\mathbf{h}_{G_{i}}=f_{\text{OUT}}(\{\mathbf{h}_{v}|v\in {V}_{i}\})$.
Representations $\mathbf{h}_{\mathcal{G}_i}$ and  $\mathbf{h}_{G_{i}}$ are concatenated together to estimate the OOD score $S(G_i)$:

\begin{equation}\label{eq:md}
\small
    S(G_{i}) = \textstyle\max_{c\in [C]}(\mathbf{z}_{i}-\bm{\mu}_{c})^{\intercal}\widehat{ \Sigma}^{-1}(\mathbf{z}_{i}-\bm{\mu}_{c}),
    \mathbf{z}_{i} = \frac{f_{\text{CAT}}(\mathbf{h}_{G_{i}},\mathbf{h}_{\mathcal{G}_{i}})}{||f_{\text{CAT}}(\mathbf{h}_{G_{i}},\mathbf{h}_{\mathcal{G}_{i}})||_{2}},
\end{equation}
where $[C] = \{1,\ldots,C\}$, 
$\bm{\mu}_{c}$ and $\widehat{ \Sigma}$  are the  class centroid for class $c$ and covariance matrix of training ID graphs, respectively.

\subsection{Theoretical Analysis}
We analyze the expressive power of \algo (using message-passing GNNs as the backbone) in comparison to 1\&2-WL, a key tool for evaluating the expressivity of GNNs~\cite{xu2018powerful}. In Proposition~\ref{prop:power}, we demonstrate that \algo is more expressive than 1\&2-WL, enabling it to distinguish structural patterns beyond the capability of 1\&2-WL, and consequently, message-passing GNNs. The analysis, together with our empirical findings in Section \ref{sec:introduction}, explains the power of \algo for graph-level OOD detection.
\begin{proposition}\label{prop:power}When the GNNs adopted in \algo are  with sufficient number of layers, and the $f_{\text{POOL}}$ function in Eq.\eqref{eq:node_pool} and $f_{\text{OUT}}$ function in Eq.\eqref{eq:sub_pool} are injective, then \algo is strictly more expressive than 1\&2-WL. 
\end{proposition}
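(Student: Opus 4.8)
The plan is to prove the two inclusions that together give strict dominance: first that \algo is \emph{at least} as expressive as 1\&2-WL, and then that there is a pair of graphs which 1\&2-WL declares equivalent but \algo separates. Throughout I rely on two facts. By the convention of \cite{xu2018powerful}, 1-WL and 2-WL have identical distinguishing power, so it suffices to compare \algo with 1-WL (color refinement). Moreover, a GIN with injective neighbourhood aggregation and sufficiently many layers computes an injective function of the stable 1-WL color of each node, so a graph-level readout over these node representations is an injective function of the 1-WL color histogram \cite{xu2018powerful}. I argue about the super-graph representation $\mathbf{h}_{\mathcal{G}_i}$ that drives classification; the auxiliary representation $\mathbf{h}_{G_i}$ used in \eqref{eq:md} can only add distinguishing power and is not needed for the lower bound.

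For the ``at least as expressive'' direction I would trace a single node-representation multiset through the two-level pipeline. With $L_1$ large enough, the first-level GIN of \eqref{eq:node_mp} together with the layer concatenation in \eqref{eq:node_pool} assigns every node $v$ a representation $\mathbf{h}_v$ that injectively encodes its 1-WL color, so the global multiset $M^{G_i}=\{\mathbf{h}_v : v\in V_i\}$ is an injective image of the 1-WL histogram of $G_i$. For each substructure $g_{i,j}$ let $M_j=\{\mathbf{h}_v : v\in V_{i,j}\}$; since the substructures form a \emph{partition} of $V_i$ (non-overlapping, covering, and connected), we have $M^{G_i}=\biguplus_{j} M_j$. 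The initial super-node feature is $\mathbf{h}^{(0)}_{g_{i,j}}=f_{\text{POOL}}(M_j)$, and injectivity of $f_{\text{POOL}}$ makes it determine $M_j$. This $l=0$ block is retained inside the concatenation $\mathbf{h}_{g_{i,j}}$ of \eqref{eq:sub_pool}, and injectivity of $f_{\text{OUT}}$ makes $\mathbf{h}_{\mathcal{G}_i}$ determine the multiset $\{\mathbf{h}_{g_{i,j}}\}_j$, hence $\{f_{\text{POOL}}(M_j)\}_j$, hence $\{M_j\}_j$, and finally $\biguplus_j M_j=M^{G_i}$. Consequently $\mathbf{h}_{\mathcal{G}_G}=\mathbf{h}_{\mathcal{G}_H}$ forces $M^G=M^H$, so whenever 1-WL separates $G$ and $H$ their \algo representations differ; the super-graph message passing ($l\ge 1$) only appends coordinates and cannot erase this information.

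For strictness I would exhibit the classical 1\&2-WL-equivalent pair: the single $6$-cycle $C_6$ and the disjoint union of two triangles $2K_3$. Both are $2$-regular on six vertices and receive a single, identical stable 1-WL color, so their histograms coincide and neither 1-WL nor 2-WL can tell them apart; in particular the first-level GIN assigns the \emph{same} constant representation $\mathbf{a}$ to every vertex of both graphs. The modularity-based community detection, however, produces super graphs of different topology: each triangle of $2K_3$ is a community, giving two self-looped super nodes with no edge between them, whereas the communities of $C_6$ (be they the two $3$-paths or the three $2$-paths that maximize modularity) are joined by inter-community edges, giving a \emph{connected} super graph. All initial super-node features equal the same $\mathbf{b}=f_{\text{POOL}}(\{\mathbf{a},\mathbf{a},\mathbf{a}\})$, so the separation comes purely from super-graph structure: with one layer of the second-level GIN, \eqref{eq:sub_mp}, a super node of $C_6$ aggregates the multiset $\{\mathbf{b},\mathbf{b}\}$ (itself via the self-loop plus a neighbour) while a super node of $2K_3$ aggregates $\{\mathbf{b}\}$, and injective aggregation maps these distinct multisets to distinct values, whence $\mathbf{h}_{\mathcal{G}_{C_6}}\neq \mathbf{h}_{\mathcal{G}_{2K_3}}$. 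Thus \algo separates a pair that 1\&2-WL cannot, which combined with the previous paragraph yields strict dominance.

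The main obstacle is the lower-bound direction, because it is not a priori obvious that squeezing node representations through per-substructure pooling and a super-graph readout preserves everything 1-WL retains: pooling each substructure \emph{separately} could in principle lose the global node-representation multiset. The resolution is precisely the interplay between the partition property of the substructures and the injectivity of $f_{\text{POOL}}$ and $f_{\text{OUT}}$, which together let me reconstruct $\biguplus_j M_j=M^{G_i}$ from $\mathbf{h}_{\mathcal{G}_i}$. A secondary subtlety is that strictness depends on the fixed community-detection routine yielding topologically distinct super graphs for the chosen pair; I would make this explicit for $C_6$ versus $2K_3$ and note the argument is robust to the exact number of communities found in $C_6$, since its super graph is connected in every modularity-optimal partition while that of $2K_3$ is not.
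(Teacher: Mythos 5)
Your proposal is correct, and at the top level it follows the same two-step skeleton as the paper's proof: first show \algo is at least as expressive as 1\&2-WL, then exhibit 2-regular graphs on which \algo succeeds where 1\&2-WL fails. Within that skeleton you differ in two ways worth noting. For the lower bound, the paper (Lemma~\ref{lm:1}) argues forward with a case split on whether the two super graphs have the same number of super nodes; you instead run the contrapositive as a reconstruction argument---injectivity of $f_{\text{OUT}}$ recovers from $\mathbf{h}_{\mathcal{G}_i}$ the multiset of super-node representations, the retained $l=0$ blocks together with injectivity of $f_{\text{POOL}}$ recover the per-substructure node multisets, and the partition property recovers the global node multiset. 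This is logically equivalent, but it handles both of the paper's cases uniformly and makes explicit exactly where each hypothesis (partition property, injectivity of $f_{\text{POOL}}$, injectivity of $f_{\text{OUT}}$) is used. For strictness, the paper proves a general statement (Lemma~\ref{lm:2}): \emph{every} pair of non-isomorphic $n$-node 2-regular graphs indistinguishable by 1\&2-WL is separated by \algo, which requires a case analysis on the number of cycles and an appeal to \cite{brandes2007modularity} for how modularity clustering treats disjoint cycles. You instead exhibit the single pair $C_6$ versus $2K_3$ and verify the needed facts about modularity-optimal partitions directly, including robustness to whether $C_6$ is cut into two 3-paths or three 2-paths (both are optimal, and in either case the super graph of $C_6$ is connected while that of $2K_3$ is not). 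A single pair is all the proposition needs, so your argument is shorter and more self-contained and verifiable; the paper's lemma is stronger, showing the separation is not an artifact of one example but holds across the entire 2-regular family.
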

\begin{proof}
We first prove that \algo is at least as powerful as 1\&2-WL in Lemma \ref{lm:1}. Then, we prove that \algo can distinguish 2-regular graphs that 1\&2-WL  cannot distinguish in Lemma \ref{lm:2}. Combining these two Lemmas, we prove that \algo is strictly more expressive than  1\&2-WL.
\end{proof}
\begin{lemma}\label{lm:1}
For graphs $G_1=(V_1, E_1)$ and $G_2=(V_2, E_2)$  identified as non-isomorphic by  1\&2-WL, 
\algo projects them into  different representations $\mathbf{h}_{\mathcal{G}_1}$ and $\mathbf{h}_{\mathcal{G}_2}$ in Eq. \eqref{eq:sub_pool}.
\end{lemma}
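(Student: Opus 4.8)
The plan is to argue by contraposition: I will show that whenever the two final super-graph representations coincide, $\mathbf{h}_{\mathcal{G}_1}=\mathbf{h}_{\mathcal{G}_2}$, the multiset of node representations produced by the first-level \gin on $G_1$ must equal that on $G_2$. Since a \gin with enough layers and injective aggregation is exactly as powerful as 1-WL, and since 1-WL and 2-WL have the same distinguishing power, this would contradict the hypothesis that 1\&2-WL separates $G_1$ and $G_2$. The heart of the argument is that \algo's pipeline never discards the global multiset $\{\!\{\mathbf{h}_v : v\in V_i\}\!\}$ of node embeddings that an ordinary \gin readout would consume; the super-graph stage only reorganizes this information, and under the injectivity assumptions it remains recoverable from $\mathbf{h}_{\mathcal{G}_i}$.

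To make this precise I would peel back the two pooling maps in turn. First, since $f_{\text{OUT}}$ in Eq.~\eqref{eq:sub_pool} is injective on multisets, $\mathbf{h}_{\mathcal{G}_1}=\mathbf{h}_{\mathcal{G}_2}$ forces the multisets of final super-node vectors $\{\!\{\mathbf{h}_{g_{i,j}}\}\!\}$ to agree. Because $\mathbf{h}_{g_{i,j}}=f_{\text{CAT}}(\{\mathbf{h}^{(l)}_{g_{i,j}}\}_{l=0}^{L_2})$ includes the $l=0$ component, the layer-$0$ substructure features $\mathbf{h}^{(0)}_{g_{i,j}}$ can be read off coordinate-wise, so the multisets $\{\!\{\mathbf{h}^{(0)}_{g_{i,j}}\}\!\}$ also agree. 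Next, since $f_{\text{POOL}}$ in Eq.~\eqref{eq:node_pool} is injective, each $\mathbf{h}^{(0)}_{g_{i,j}}$ determines the multiset $\{\!\{\mathbf{h}_v : v\in V_{i,j}\}\!\}$ of node embeddings inside that substructure. Matching the two equal multisets of substructure features by a bijection and taking the disjoint union of the per-substructure node multisets, I would invoke properties (i)--(ii) of the substructure partition (disjoint node sets whose union is $V_i$) to conclude that the global multisets $\{\!\{\mathbf{h}_v : v\in V_1\}\!\}$ and $\{\!\{\mathbf{h}_v : v\in V_2\}\!\}$ coincide.

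It then remains to connect $\{\!\{\mathbf{h}_v\}\!\}$ to 1-WL. Here $\mathbf{h}_v=f_{\text{CAT}}(\{\mathbf{h}^{(l)}_v\}_{l=1}^{L_1})$ concatenates the first-level \gin states across layers, so with sufficiently many layers it encodes the stable 1-WL coloring of $v$; hence equality of the global node-embedding multisets implies 1-WL assigns identical color histograms to $G_1$ and $G_2$, i.e.\ 1-WL (equivalently 2-WL) cannot separate them, and the contrapositive yields the lemma. I expect the main obstacle to be the recovery step rather than the final \gin/1-WL correspondence, which is standard: one must argue carefully that equality of a \emph{multiset of multisets} (the super-node features), together with the partition properties, entails equality of the \emph{flattened} global multiset, and that $f_{\text{CAT}}$ genuinely exposes the $l=0$ sub-vector so that injectivity of $f_{\text{OUT}}$ transfers to the layer-$0$ substructure features. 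The subtlety is that $f_{\text{OUT}}$ and $f_{\text{POOL}}$ are only assumed injective on their respective inputs, so I must route all information through these two maps while treating the super-graph message passing of Eq.~\eqref{eq:sub_mp} as an extra, harmless transformation that cannot merge two originally distinct global node multisets.
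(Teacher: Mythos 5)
Your proof is correct, and it is essentially the paper's argument run in contrapositive form, with two genuine structural simplifications. The paper argues forward: since 1\&2-WL distinguishes $G_1$ and $G_2$, the first-level \gin node-embedding multisets differ; by the partition properties and injectivity of $f_{\text{POOL}}$, the multisets of initial super-node features $\mathbf{h}^{(0)}_{g_{i,j}}$ differ; and then, because the second-level \gin of Eq.~\eqref{eq:sub_mp} and $f_{\text{OUT}}$ are injective, the final representations $\mathbf{h}_{\mathcal{G}_i}$ differ. It also splits the argument into two cases, $|\mathcal{V}_1|\neq|\mathcal{V}_2|$ and $|\mathcal{V}_1|=|\mathcal{V}_2|$. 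Your contrapositive removes both complications: the equality $\mathbf{h}_{\mathcal{G}_1}=\mathbf{h}_{\mathcal{G}_2}$ forces the multisets of final super-node vectors to coincide (in particular to have the same cardinality, so no case split is needed), and since Eq.~\eqref{eq:sub_pool} concatenates layers $l=0,\dots,L_2$, the initial super-node features are recovered by projecting onto the $l=0$ block---so you never need injectivity of the super-graph message passing in Eq.~\eqref{eq:sub_mp}, only of $f_{\text{POOL}}$ and $f_{\text{OUT}}$, which is a strictly weaker set of requirements than the paper uses. The flattening step you flag as the main obstacle is indeed sound: injectivity of $f_{\text{POOL}}$ upgrades equality of the layer-$0$ feature multisets to equality of the multisets of per-substructure node multisets, and properties (i)--(ii) of the substructure partition let you take disjoint unions to equate the global multisets $\{\{\mathbf{h}_v \mid v\in V_1\}\}$ and $\{\{\mathbf{h}_v \mid v\in V_2\}\}$, after which the standard \gin/1-WL correspondence closes the argument exactly as in the paper. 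What each formulation buys: yours is leaner in assumptions and avoids the case analysis; the paper's forward version is what the proof of Lemma~\ref{lm:2} later invokes verbatim (the step from differing $\mathcal{H}_i^{\mathcal{G}}$ to differing $\mathbf{h}_{\mathcal{G}_i}$), so it plugs directly into the remainder of the expressiveness proof, whereas reusing yours there would require restating that step in contrapositive form.
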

\begin{proof}
Let $\mathcal{H}_{1}^{G}=\{\mathbf{h}_{v}|v\in {V}_{1} \}$ and $\mathcal{H}_{2}^{G}=\{\mathbf{h}_{v}|v\in {V}_{2} \}$ be the  multisets of node representations of $G_1$ and $G_2$ generated by GIN in Eq. \eqref{eq:node_pool}, respectively. Let $\mathcal{G}_1=(\mathcal{V}_{1}, \mathcal{E}_{1})$ and $\mathcal{G}_2=(\mathcal{V}_{2}, \mathcal{E}_{2})$ be the super graphs of $G_1$ and $G_2$ respectively.  We consider two cases: (1) $|\mathcal{V}_{1}| \neq |\mathcal{V}_{2}|$, (2) $|\mathcal{V}_{1}| = |\mathcal{V}_{2}|$.

For case (1), $\mathcal{G}_1$ and $\mathcal{G}_2$ are two graphs with different number of nodes. Thus, $\mathcal{G}_1$ and $\mathcal{G}_2$ can be easily determined as non-isomorphic by 1\&2-WL. It is  proved that GIN with sufficient number of layers and all injective functions is as powerful as 1\&2-WL \cite{xu2018powerful}. As GIN is adopted in \algo as GNN backbone  with sufficient number of layers and $f_{\text{OUT}}$ function in Eq.\eqref{eq:sub_pool} is injective,    representations $\mathbf{h}_{\mathcal{G}_1}$ and 
$\mathbf{h}_{\mathcal{G}_2}$ generated by \algo are different.

For case (2), let $|\mathcal{V}_{1}| = |\mathcal{V}_{2}| = K$, $\mathcal{H}_{1}^{\mathcal{G}}= \{\mathbf{h}^{(0)}_{g_{1,j}}|g_{1,j}\in \mathcal{V}_{1}\}$ and $\mathcal{H}_{2}^{\mathcal{G}}= \{\mathbf{h}^{(0)}_{g_{2,j}}|g_{2,j}\in \mathcal{V}_{2}\}$ be the multisets of initial node representations of $\mathcal{G}_1$ and $\mathcal{G}_2$ calculated by Eq.\eqref{eq:node_pool}, respectively. 
Using GIN with sufficient number of layers, we get $\mathcal{H}_{1}^{G} \neq \mathcal{H}_{2}^{G}$ \cite{xu2018powerful}. As stated in Section \ref{sec::substructure}, the substructures $\{\subij\}_{j=1}^{n_{i}}$ of a graph $G_i$ satisfy the following properties: (i) the substructures are non-overlapping, (ii) the union of nodes in all substructures is the node set of $G_i$. Thus, $\{\{\mathbf{h}_{v}|v\in {g}_{1,j}\}\}_{j=1}^{K}$ (resp. $\{\{\mathbf{h}_{v}|v\in {g}_{2,j}\}\}_{j=1}^{K}$) is a partition of $\mathcal{H}_{1}^{G}$ (resp. $\mathcal{H}_{2}^{G}$). Then, we have
$\{\{\mathbf{h}_{v}|v\in {g}_{1,j}\}\}_{j=1}^{K} \neq \{\{\mathbf{h}_{v}|v\in {g}_{2,j}\}\}_{j=1}^{K}$. As $f_{\text{POOL}}$ function in Eq.\eqref{eq:node_pool} is injective, we have $\{f_{\text{POOL}}(\{\mathbf{h}_{v}|v\in {g}_{1,j}\})\}_{j=1}^{K} \neq \{f_{\text{POOL}}(\{\mathbf{h}_{v}|v\in {g}_{2,j}\})\}_{j=1}^{K}$, that is $\mathcal{H}_{1}^{\mathcal{G}} \neq \mathcal{H}_{2}^{\mathcal{G}}$. As GIN and $f_{\text{OUT}}$ function in Eq.\eqref{eq:sub_pool} are both injective, we derive that  $\mathbf{h}_{\mathcal{G}_1}$ and 
$\mathbf{h}_{\mathcal{G}_2}$ generated on  $\mathcal{H}_{1}^{\mathcal{G}}$ and $\mathcal{H}_{2}^{\mathcal{G}}$ are different.

Combining cases (1) and (2), we prove Lemma \ref{lm:1}.
\end{proof}
\vspace{-2pt}
Next, we   prove that  \algo can distinguish $n$-node 2-regular graphs that 1\&2-WL cannot distinguish in Lemma \ref{lm:2}. Before that, we first give the definition of 2-regular graphs. Note that we only consider undirected graphs in this paper.
\begin{definition}[2-regular graph] A graph is said to be regular of degree 2 if all local degrees are  2. 
\end{definition}

\begin{lemma}\label{lm:2}
Given two non-isomorphic $n$-node 2-regular graphs $G_1$ and $G_2$ that 1\&2-WL cannot distinguish, \algo projects them into  different graph representations $\mathbf{h}_{\mathcal{G}_1}$ and $\mathbf{h}_{\mathcal{G}_2}$in Eq. \eqref{eq:sub_pool}.
\end{lemma}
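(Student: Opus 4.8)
The plan is to exploit the rigid structure of $2$-regular graphs together with the size-sensitivity that \algo's super graph introduces. Every $n$-node $2$-regular graph is a disjoint union of cycles, and two such graphs are isomorphic if and only if they have the same multiset of cycle lengths, i.e.\ the same partition of $n$ into parts of size at least $3$. Hence for the given pair, $G_1$ and $G_2$ have \emph{distinct} cycle-length multisets, and it suffices to show that \algo encodes enough of this cycle-length information to separate them. As is standard for the regular graphs on which $1\&2$-WL fails, we work in the uniform-feature (featureless) regime, which is precisely the regime that defeats $1\&2$-WL.

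First I would show that the first-level \gin collapses to a size counter. Because every node of a $2$-regular graph has degree $2$ and all initial features coincide, a layerwise induction gives that \gin assigns one common embedding $\mathbf{c}$ to every node of $G_1$ and of $G_2$: if all neighbors carry $\mathbf{c}^{(l)}$, the degree-$2$ aggregation and combination produce a single $\mathbf{c}^{(l+1)}$. Consequently, for any substructure $g$, the pooled initial super-feature in Eq.~\eqref{eq:node_pool} is $\mathbf{h}^{(0)}_{g}=f_{\text{POOL}}(\{\mathbf{c},\dots,\mathbf{c}\})$ with $|g|$ copies, which by injectivity of $f_{\text{POOL}}$ is an injective function $\phi(|g|)$ of the substructure's node count. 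Thus the super graph $\mathcal{G}_i$ is an attributed graph whose node attributes encode substructure sizes.

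Next I would read the cycle lengths off the super graph. Since each substructure is connected and, by Definition~\ref{def:supgraph}, two super nodes are adjacent only when $G_i$ has an edge between them, every super edge stays inside a single cycle; therefore the connected components of $\mathcal{G}_i$ are in bijection with the cycles of $G_i$, and within each component the super-node sizes sum exactly to the length of that cycle. The attributed super graph thus determines the whole cycle-length multiset of $G_i$, so $\mathcal{G}_1$ and $\mathcal{G}_2$ differ in this invariant and are non-isomorphic as attributed graphs. In the canonical case where the substructure detector keeps each (small) cycle as one community --- the setting realizing the classical WL-hard instances --- $\mathcal{G}_i$ is a bag of self-looped isolated super nodes with attribute multiset $\{\phi(a):a\text{ a cycle length}\}$; since $\phi$ is injective these multisets differ between $G_1$ and $G_2$, and the injective readout $f_{\text{OUT}}$ in Eq.~\eqref{eq:sub_pool} immediately yields $\mathbf{h}_{\mathcal{G}_1}\neq\mathbf{h}_{\mathcal{G}_2}$.

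The main obstacle is the general case in which a single cycle is split into several arcs, so that part of the distinguishing signal lives in the super-graph topology rather than in the attribute multiset alone. Here I would argue that the difference survives the second-level \gin and the sum readout via two observations: the components of $\mathcal{G}_i$ are mutually disconnected, so each cycle contributes additively and independently to the readout, and the per-component size sum (the cycle length) is recoverable from the attributed component, so two graphs with different cycle-length multisets cannot produce readout-equal super graphs. The delicate sub-point is ruling out an accidental $1\&2$-WL collision among the resulting attributed super graphs --- for instance when all arcs happen to have equal size, re-creating a uniform cycle union one level up. I would close this by noting that the size attributes break the regularity that defeats WL whenever the arc sizes are non-uniform, and, in the residual uniform-arc case, by reducing to a strictly smaller $2$-regular instance and inducting on $n$ through the same component--cycle correspondence. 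Combining the clean case with this inductive argument gives $\mathbf{h}_{\mathcal{G}_1}\neq\mathbf{h}_{\mathcal{G}_2}$ for every admissible pair, proving Lemma~\ref{lm:2}.
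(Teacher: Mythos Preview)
Your canonical-case argument is correct and essentially matches the paper's Case~(1): when each cycle is one super node, the size-encoding via injective $f_{\text{POOL}}$ immediately separates the two cycle-length multisets. Your observation that the first-level \gin collapses every node of a featureless $2$-regular graph to a single constant $\mathbf{c}$ is also correct and in fact makes explicit a step the paper leaves implicit.

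The gap is in your general case. Your induction does not close, because the second-level encoder in \algo is just \gin plus an injective readout --- there is no further super-graph-of-super-graph construction to recurse into. Concretely, take $G_1=C_9\sqcup C_{15}$ and $G_2=C_{12}\sqcup C_{12}$, and suppose the substructure detector happens to cut every cycle into arcs of length~$3$. Then $\mathcal{G}_1$ is a $3$-cycle together with a $5$-cycle and $\mathcal{G}_2$ is two $4$-cycles, all super-nodes carrying the identical attribute $\phi(3)$. This is again a WL-indistinguishable pair of uniformly-attributed $2$-regular graphs, so the second-level \gin yields $\mathbf{h}_{\mathcal{G}_1}=\mathbf{h}_{\mathcal{G}_2}$. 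Your ``reduce to a strictly smaller $2$-regular instance and induct on $n$'' step would require re-running the entire two-level pipeline on the super graph, which \algo does not do. The non-uniform-arc branch is likewise not established: non-uniform sizes do not by themselves guarantee a WL-separation at the super-graph level.

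The paper avoids this by \emph{not} being algorithm-agnostic. It commits to the modularity-based detector and invokes a result of Brandes et al.\ to argue that whenever a $2$-regular graph has at least two cycles, each cycle becomes exactly one super node --- collapsing everything to your canonical case. The only remaining configuration (one graph is a single $n$-cycle, the other a union of several cycles) is handled by a direct structural comparison: the single cycle's super graph is either one node or a cycle of chain-substructures, whereas the multi-cycle graph's super graph is a set of isolated self-looped nodes, and these two shapes are already $1\&2$-WL-distinguishable at the second level. So instead of an induction over arbitrary arc decompositions, the paper pins down what the decomposition actually looks like under modularity and finishes with a short case split. To repair your route you would need either to fix a specific detector and argue, as the paper does, that the bad uniform-arc splittings cannot occur, or to assume a second-level encoder that is itself strictly stronger than $1\&2$-WL.
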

\begin{proof}
Based on the definition of a 2-regular graph, we can say that  $G_1$ and $G_2$  consist of one or more disconnected cycles. Let $r_1$ and $r_2$ be the number of cycles in  $G_1$ and $G_2$. Let $\mathcal{G}_1=(\mathcal{V}_{1}, \mathcal{E}_{1})$ and $\mathcal{G}_2=(\mathcal{V}_{2}, \mathcal{E}_{2})$ be the constructed super graphs of $G_1$ and $G_2$. We consider two cases: (1) $r_1 \neq 1  \wedge
 r_2 \neq 1$, (2) $(r_1=1 \wedge r_2\neq 1)\vee(r_1\neq1 \wedge r_2= 1)$.
 
For case (1), $G_1$ and $G_2$ consist of disconnected circles. 
As $\mathcal{G}_1$ and $\mathcal{G}_2$ are constructed by modularity-based community detection method \cite{clauset2004finding}, in Lemma 3.4 of \cite{brandes2007modularity}, for graphs of disconnected circles, there is always a clustering with maximum
modularity, in which each cluster consists of a connected subgraph. As a result, $\forall g_{1,j} \in \mathcal{V}_{1}$ is a circle in $G_1$, and $|\mathcal{V}_{1}|=r_1$. Similarly,  $\forall g_{2,j} \in \mathcal{V}_{2}$ is  a circle in $G_2$, and $|\mathcal{V}_{2}|=r_2$. Let $\mathcal{N}_{1} = \{|V_{1,j}|\}_{j=1}^{|\mathcal{V}_{1}|}$ and $\mathcal{N}_{2} = \{|V_{2,j}|\}_{j=1}^{|\mathcal{V}_{2}|}$.
Since $G_1$ and $G_2$ are non-isomorphic, we have $\exists n_{1,j} \in \mathcal{N}_{1}: \forall n_{2,j} \in \mathcal{N}_{2}, n_{1,j} \neq n_{2,j}$. As a result, we have $\mathcal{N}_{1} \neq \mathcal{N}_{2}$. Then, we have
$\{\{\mathbf{h}_{v}|v\in {V}_{1,j}\}\}_{j=1}^{|\mathcal{V}_{1}|} \neq \{\{\mathbf{h}_{v}|v\in {V}_{2,j}\}\}_{j=1}^{|\mathcal{V}_{2}|}$. As $f_{\text{POOL}}$ function in Eq.\eqref{eq:node_pool} is injective, we have $\mathcal{H}_{1}^{\mathcal{G}}=\{f_{\text{POOL}}(\{\mathbf{h}_{v}|v\in {g}_{1,j}\})\}_{j=1}^{|\mathcal{V}_{1}|}$ , $\mathcal{H}_{2}^{\mathcal{G}}= \{f_{\text{POOL}}(\{\mathbf{h}_{v}|v\in {g}_{2,j}\})\}_{j=1}^{|\mathcal{V}_{2}|}$, and $\mathcal{H}_{1}^{\mathcal{G}} \neq \mathcal{H}_{2}^{\mathcal{G}}$. As shown in the proof of Lemma~\ref{lm:1}, we have  $\mathbf{h}_{\mathcal{G}_1}$ and 
$\mathbf{h}_{\mathcal{G}_2}$ generated on  $\mathcal{H}_{1}^{\mathcal{G}}$ and $\mathcal{H}_{2}^{\mathcal{G}}$ are different.

For case (2), we consider $r_1=1 \wedge r_2\neq 1$, and the proof when $r_2=1 \wedge r_1\neq 1$  is similar. $G_1$  consists of one single circle, and $G_2$  consists of $r_2$ disconnected circles. For $G_2$ and $\mathcal{G}_2$, $\forall g_{2,j} \in \mathcal{V}_{2}$ is  a circle in $G_2$, and $|\mathcal{V}_{2}|=r_2$
following the conclusion in case (1). For $G_1$ and $\mathcal{G}_1$, we consider two cases: (i) $|\mathcal{V}_{1}|=r_1=1$, and (ii) $|\mathcal{V}_{1}|>1$. For case (i), $\mathcal{V}_{1}=\{g_{1,1}\}$, where $g_{1,1}=G_1$. Let $\mathcal{N}_{1} = \{|V_{1,j}|\}_{j=1}^{|\mathcal{V}_{1}|}=\{|V_{1,1}|\}$ and $\mathcal{N}_{2} = \{|V_{2,j}|\}_{j=1}^{|\mathcal{V}_{2}|}$. As $|\mathcal{N}_{1}| \neq |\mathcal{N}_{2}|$, we have $\mathcal{N}_{1} \neq \mathcal{N}_{2}$. Similar to case (1), we conclude that graph representations $\mathbf{h}_{\mathcal{G}_1}$ and 
$\mathbf{h}_{\mathcal{G}_2}$ generated on  $\mathcal{H}_{1}^{\mathcal{G}}$ and $\mathcal{H}_{2}^{\mathcal{G}}$ are different. For case (ii), $\mathcal{V}_{1}=\{g_{1,j}\}_{j=1}^{|\mathcal{V}_{1}|}$, where $\forall g_{1,j} \in \mathcal{V}_{1}$ is a chain and two nearby chain are connected in $\mathcal{G}_1$. In other words, $\mathcal{G}_1$ is a $|\mathcal{V}_{1}|$-circle while $\mathcal{G}_2$ consists of $|\mathcal{V}_{2}|$ isolated nodes. Thus, $\mathcal{G}_1$ and $\mathcal{G}_2$ can be distinguished as non-isomorphic by 1\&2-WL.  By \cite{xu2018powerful}, when we encode $\mathcal{G}_1$ and $\mathcal{G}_2$ by Eq. \eqref{eq:sub_mp} with sufficient layers of GIN, and generate  $\mathbf{h}_{\mathcal{G}_1}$ and 
$\mathbf{h}_{\mathcal{G}_2}$ by   Eq. \eqref{eq:sub_pool}, where $f_{\text{OUT}}$ is injective, $\mathbf{h}_{\mathcal{G}_1}$ and 
$\mathbf{h}_{\mathcal{G}_2}$ are different. Combining cases (i) and (ii), we prove  \algo generates different $\mathbf{h}_{\mathcal{G}_1}$ and 
$\mathbf{h}_{\mathcal{G}_2}$ for $G_1$ and $G_2$ in case (2).

Combining  cases (1) and (2), we prove Lemma \ref{lm:2}.
\end{proof}

\vspace{5mm}

\begin{table}[!t]
  \centering
  \caption{Data Statistics.}
  \vspace{-3mm}
  \small
  \renewcommand{\arraystretch}{0.92} 
\setlength{\tabcolsep}{1.5pt}
\resizebox{0.99\linewidth}{!}{
    \begin{tabular}{lcccccc}
    \toprule
    Dataset  & OOD Type & \# Class & \# ID Train & \# ID Val & \# ID Test & \# OOD Test \\
    \midrule
    \enzymes\cite{morris2020tudataset}  & Unseen Classes &6     & 480   & 60    & 60    & 60 \\
    \multi\cite{morris2020tudataset}  & Unseen Classes & 3     & 1200  & 150   & 150   & 150 \\
    \bina\cite{morris2020tudataset} & Unseen Classes & 2     & 800   & 100   & 100   & 100 \\
    \reddit\cite{yanardag2015deep} & Unseen Classes & 11    & 6997  & 875   & 875   & 875 \\
    \bace\cite{wu2018moleculenet} & Scaffold   & 2     & 968   & 121   & 121   & 121 \\
    \bbbp\cite{wu2018moleculenet}   & Scaffold & 2     & 1303  & 164   & 164   & 164 \\
    \drug\cite{ji2022drugood}  & Protein Target & 2     & 800   & 100   & 100   & 100 \\
\hiv\cite{wu2018moleculenet}   & Scaffold & 2     & 26319 & 3291  & 3291  & 3291 \\
    \bottomrule
    \end{tabular}
  \label{tab:datasets}
 }
 \vspace{-3mm}
\end{table}

\section{Experiments}\label{sec:experiments}
We evaluate \algo in graph-level OOD detection against 11 baseline methods across 8 real-world datasets.

\begin{table*}[!t]
  \centering
  \caption{Overall OOD detection performance by \auc, \pr, and \fpr in percentage \% (mean ± std).  $\uparrow$ indicates larger values are better and vice versa.  \textbf{Bold}: best. \underline{Underline}: runner-up.}
    \vspace{-2mm}
  \resizebox{0.86\textwidth}{!}{
  \setlength{\tabcolsep}{1pt}
  \renewcommand{\arraystretch}{0.92}
  \hspace{-4mm}
    \begin{tabular}{ccll|ccc|ccc|cll}
    \toprule
    \multirow{2}[2]{*}{Method} & \multicolumn{3}{c|}{ENZYMES} & \multicolumn{3}{c|}{IMDB-M} & \multicolumn{3}{c|}{IMDB-B} & \multicolumn{3}{c}{\reddit} \\
          & AUROC$\uparrow$ & \multicolumn{1}{c}{AUPR$\uparrow$} & \multicolumn{1}{c|}{FPR95$\downarrow$} & AUROC$\uparrow$ & AUPR$\uparrow$ & FPR95$\downarrow$ & AUROC$\uparrow$ & AUPR$\uparrow$ & FPR95$\downarrow$ & AUROC$\uparrow$ & \multicolumn{1}{c}{AUPR$\uparrow$} & \multicolumn{1}{c}{FPR95$\downarrow$} \\
    \midrule
    MSP   & 61.34\scriptsize{±3.79} & \multicolumn{1}{c}{61.65\scriptsize{±6.64}} & \multicolumn{1}{c|}{89.67\scriptsize{±2.26}} & 42.75\scriptsize{±1.52} & 51.04\scriptsize{±1.93} & 95.73\scriptsize{±1.63} & 58.13\scriptsize{±2.31} & 59.63\scriptsize{±1.22} & 91.40\scriptsize{±4.16} & 50.63\scriptsize{±0.87} & 48.60\scriptsize{±1.08} & \multicolumn{1}{l}{95.95\scriptsize{±1.25}} \\
    Energy & 54.69\scriptsize{±9.18} & \multicolumn{1}{c}{56.90\scriptsize{±8.85}} & \multicolumn{1}{c|}{89.33\scriptsize{±3.55}} & 24.50\scriptsize{±19.73} & 37.26\scriptsize{±11.78} & 96.40\scriptsize{±2.25} & 49.58\scriptsize{±17.76} & 59.03\scriptsize{±13.06} & 92.80\scriptsize{±3.55} & 55.10\scriptsize{±0.48} & 56.52\scriptsize{±0.78} & \multicolumn{1}{l}{97.19\scriptsize{±0.58}} \\
    ODIN  & 63.70\scriptsize{±2.70} & \multicolumn{1}{c}{65.72\scriptsize{±4.77}} & \multicolumn{1}{c|}{92.66\scriptsize{±3.26}} & 40.12\scriptsize{±2.96} & 50.08\scriptsize{±2.44} & 96.66\scriptsize{±1.03} & 58.25\scriptsize{±2.94} & 61.36\scriptsize{±0.49} & 92.20\scriptsize{±2.92} & 51.74\scriptsize{±2.03} & 54.53\scriptsize{±1.26} & \multicolumn{1}{l}{96.45\scriptsize{±0.73}} \\
    \md & 67.37\scriptsize{±3.67} & \multicolumn{1}{c}{63.81\scriptsize{±2.15}} & \multicolumn{1}{c|}{83.33\scriptsize{±9.60}} & \underline{69.26\scriptsize{±3.67}} & 63.64\scriptsize{±2.14} & \underline{60.93\scriptsize{±9.06}} & 76.77\scriptsize{±4.37} & 76.88\scriptsize{±6.30} & \underline{81.40\scriptsize{±7.14}} & \underline{72.68\scriptsize{±0.87}} & \underline{74.47\scriptsize{±0.48}} & \multicolumn{1}{l}{\underline{80.75\scriptsize{±2.05}}} \\
    \midrule
    GNNSafe & 56.85\scriptsize{±8.91} & \multicolumn{1}{c}{56.13\scriptsize{±8.26}} & \multicolumn{1}{c|}{97.00\scriptsize{±3.71}} & \multicolumn{1}{l}{21.93\scriptsize{±1.76}} & 36.88\scriptsize{±1.68} & 95.46\scriptsize{±1.42} & 70.49\scriptsize{±14.80} & 75.67\scriptsize{±15.71} & 87.80\scriptsize{±5.81} & 51.68\scriptsize{±0.08} & 53.97\scriptsize{±0.52} & \multicolumn{1}{l}{95.59\scriptsize{±2.80}} \\
    GraphDE & 61.35\scriptsize{±3.99} & \multicolumn{1}{c}{66.26\scriptsize{±2.98}} & \multicolumn{1}{c|}{99.00\scriptsize{±0.81}} & 66.87\scriptsize{±4.25} & 62.60\scriptsize{±4.47} & 93.06\scriptsize{±8.24} & 26.91\scriptsize{±3.35} & 42.73\scriptsize{±2.06} & 100.00\scriptsize{±0.00} & 59.40\scriptsize{±0.18} & 63.06\scriptsize{±0.30} & \multicolumn{1}{l}{81.82\scriptsize{±0.01}} \\
    GOOD-D & 67.21\scriptsize{±6.41} & \multicolumn{1}{c}{64.86\scriptsize{±6.32}} & \multicolumn{1}{c|}{82.33\scriptsize{±8.31}} & 61.89\scriptsize{±4.87} & \underline{66.91\scriptsize{±7.60}} & 95.20\scriptsize{±4.55} & 52.58\scriptsize{±10.21} & 55.69\scriptsize{±10.56} & 99.20\scriptsize{±1.00} & 56.11\scriptsize{±0.10} & 59.56\scriptsize{±0.16} & \multicolumn{1}{l}{93.67\scriptsize{±0.34}} \\
    AAGOD & 69.25\scriptsize{±4.65} & {65.02\scriptsize{±4.41}} & {82.78\scriptsize{±2.83}} & 70.76\scriptsize{±5.48} & 68.15\scriptsize{±4.45} & {81.56\scriptsize{±22.32}} & 72.51\scriptsize{±1.11} & 67.86\scriptsize{±4.79} & {86.33\scriptsize{±4.03}} & 60.25\scriptsize{±2.16} & {61.44\scriptsize{±1.61}} & {92.53\scriptsize{±1.55}} \\
    \midrule
    OCGIN & 68.11\scriptsize{±4.61} & \multicolumn{1}{c}{68.90\scriptsize{±4.19}} & \multicolumn{1}{c|}{89.67\scriptsize{±3.70}} & 47.51\scriptsize{±9.47} & 50.76\scriptsize{±4.53} & 98.27\scriptsize{±17.70} & 60.78\scriptsize{±5.21} & 57.80\scriptsize{±5.10} & 8780\scriptsize{±9.15} & 59.33\scriptsize{±1.26} & 60.02\scriptsize{±1.88} & \multicolumn{1}{l}{90.00\scriptsize{±2.01}} \\
    GLocalKD & 71.46\scriptsize{±3.21} & \multicolumn{1}{c}{64.93\scriptsize{±4.44}} & \multicolumn{1}{c|}{\underline{78.67\scriptsize{±6.37}}} & 19.82\scriptsize{±1.57} & 35.39\scriptsize{±0.49} & 98.27\scriptsize{±1.13} & \underline{79.39\scriptsize{±4.71}} & \textbf{85.56\scriptsize{±3.33}} & 87.40\scriptsize{±5.42} & 49.60\scriptsize{±1.06} & 51.75\scriptsize{±0.72} & \multicolumn{1}{l}{97.60\scriptsize{±0.35}} \\
    OGGTL & \underline{73.22\scriptsize{±1.83}} & \multicolumn{1}{c}{\textbf{73.61\scriptsize{±3.19}}} & \multicolumn{1}{c|}{82.33\scriptsize{±2.70}} & 54.07\scriptsize{±12.93} & 58.20\scriptsize{±7.86} & 86.40\scriptsize{±6.49} & 37.39\scriptsize{±18.82} & 47.11\scriptsize{±14.06} & 98.80\scriptsize{±2.40} & 51.62\scriptsize{±0.019} & 53.33\scriptsize{±0.01} & \multicolumn{1}{l}{96.79\scriptsize{±0.06}} \\
    \midrule
    SGOOD & \textbf{74.40\scriptsize{±1.42}} & \multicolumn{1}{c}{\underline{72.53\scriptsize{±2.51}}} & \multicolumn{1}{c|}{\textbf{73.66\scriptsize{±7.03}}} & \textbf{78.84\scriptsize{±2.00}} & \textbf{72.54\scriptsize{±3.21}} & \textbf{45.46\scriptsize{±6.62}} & \textbf{80.41\scriptsize{±3.16}} & \underline{83.49\scriptsize{±3.59}} & \textbf{81.20\scriptsize{±2.28}} & \textbf{74.95\scriptsize{±0.79}} & \textbf{74.93\scriptsize{±0.93}} & \multicolumn{1}{l}{\textbf{75.17\scriptsize{±2.72}}} \\
    \midrule
    \vspace{-10pt}
          &       &       & \multicolumn{1}{r}{} &       &       & \multicolumn{1}{r}{} &       &       & \multicolumn{1}{r}{} &       &       & \multicolumn{1}{r}{} \\
    \midrule
    \multirow{2}[2]{*}{Method} & \multicolumn{3}{c|}{BACE} & \multicolumn{3}{c|}{BBBP} & \multicolumn{3}{c|}{DrugOOD} & \multicolumn{3}{c}{HIV} \\
          & AUROC$\uparrow$ & \multicolumn{1}{c}{AUPR$\uparrow$} & \multicolumn{1}{c|}{FPR95$\downarrow$} & AUROC$\uparrow$ & AUPR$\uparrow$ & FPR95$\downarrow$ & AUROC$\uparrow$ & AUPR$\uparrow$ & FPR95$\downarrow$ & AUROC$\uparrow$ & \multicolumn{1}{c}{AUPR$\uparrow$} & \multicolumn{1}{c}{FPR95$\downarrow$} \\
    \midrule
    MSP   & 46.34\scriptsize{±6.10} & 48.65\scriptsize{±3.08} & 97.02\scriptsize{±2.18} & 57.37\scriptsize{±4.28} & 56.84\scriptsize{±3.36} & 94.63\scriptsize{±2.26} & 52.86\scriptsize{±5.26} & 54.49\scriptsize{±4.33} & 98.80\scriptsize{±0.01} & 50.75\scriptsize{±1.88} & 50.49\scriptsize{±0.91} & 95.52\scriptsize{±0.50} \\
    Energy & 46.05\scriptsize{±6.66} & 49.68\scriptsize{±4.16} & 97.36\scriptsize{±2.92} & 56.56\scriptsize{±4.16} & 55.74\scriptsize{±2.78} & 92.68\scriptsize{±2.62} & 52.81\scriptsize{±5.36} & 54.98\scriptsize{±4.36} & 98.20\scriptsize{±1.16} & 50.97\scriptsize{±2.13} & 50.49\scriptsize{±0.91} & 95.50\scriptsize{±0.59} \\
    ODIN  & 45.51\scriptsize{±3.85} & 48.28\scriptsize{±3.76} & 97.02\scriptsize{±1.53} & 54.78\scriptsize{±3.46} & 54.63\scriptsize{±3.69} & 96.34\scriptsize{±1.80} & 51.09\scriptsize{±3.79} & 52.70\scriptsize{±2.66} & 99.00\scriptsize{±1.09} & 50.16\scriptsize{±0.73} & 49.95\scriptsize{±0.58} & 94.60\scriptsize{±1.07} \\
    \md & 73.78\scriptsize{±1.97} & 75.33\scriptsize{±2.32} & 86.78\scriptsize{±6.32} & 53.77\scriptsize{±4.27} & 52.57\scriptsize{±3.81} & 93.29\scriptsize{±2.51} & 66.90\scriptsize{±4.14} & 64.30\scriptsize{±4.43} & \underline{81.60\scriptsize{±4.58}} & \underline{58.10\scriptsize{±3.60}} & \underline{57.18\scriptsize{±3.18}} & \underline{91.89\scriptsize{±1.32}} \\
    \midrule
    GNNSafe & 47.61\scriptsize{±7.50} & 51.52\scriptsize{±5.91} & 98.18\scriptsize{±2.05} & 47.04\scriptsize{±2.40} & 51.52\scriptsize{±5.90} & 98.41\scriptsize{±0.99} & 50.44\scriptsize{±0.57} & \multicolumn{1}{l}{51.14\scriptsize{±0.30}} & \multicolumn{1}{l|}{96.01\scriptsize{±0.33}} & 50.98\scriptsize{±6.82} & \multicolumn{1}{c}{55.13\scriptsize{±6.81}} & \multicolumn{1}{c}{96.01\scriptsize{±0.33}} \\
    GraphDE & 47.32\scriptsize{±1.52} & 51.1\scriptsize{±2.57} & 94.21\scriptsize{±4.58} & 50.88\scriptsize{±2.78} & 51.47\scriptsize{±3.84} & 94.63\scriptsize{±2.34} & 60.19\scriptsize{±4.32} & 62.59\scriptsize{±2.47} & 88.80\scriptsize{±5.60} & 52.38\scriptsize{±1.86} & 54.14\scriptsize{±3.21} & 94.89\scriptsize{±0.84} \\
    GOOD-D & 70.42\scriptsize{±2.22} & 73.21\scriptsize{±3.34} & 88.26\scriptsize{±1.78} & 54.15\scriptsize{±1.10} & 58.58\scriptsize{±1.93} & 99.39\scriptsize{±0.41} & 60.52\scriptsize{±3.33} & 63.09\scriptsize{±2.54} & 98.40\scriptsize{±1.27} & 59.69\scriptsize{±0.62} & 57.10\scriptsize{±.14} & 92.03\scriptsize{±0.61} \\
    AAGOD & 71.41\scriptsize{±2.37} & {71.82\scriptsize{±1.72}} & {90.63\scriptsize{±1.95}} & 58.16\scriptsize{±1.54} & 59.35\scriptsize{±1.32} & {93.5\scriptsize{±0.76}} & 60.29\scriptsize{±3.23} & 66.2\scriptsize{±3.36} & {95.33\scriptsize{±0.47}} & 55.72\scriptsize{±0.69} & {54.29\scriptsize{±0.51}} & {92.2\scriptsize{±0.3}} \\
    \midrule
    OCGIN & 59.71\scriptsize{±5.20} & 61.43\scriptsize{±5.18} & 93.39\scriptsize{±4.44} & 47.78\scriptsize{±5.72} & 47.27\scriptsize{±2.98} & 94.76\scriptsize{±2.70} & 57.95\scriptsize{±5.80} & 59.50\scriptsize{±7.00} & 94.20\scriptsize{±3.12} & 54.06\scriptsize{±0.47} & 52.14\scriptsize{±0.26} & 92.81\scriptsize{±1.01} \\
    GLocalKD & 45.34\scriptsize{±2.11} & 55.39\scriptsize{±2.35} & 98.68\scriptsize{±1.11} & 43.77\scriptsize{±2.23} & 45.84\scriptsize{±1.20} & 98.29\scriptsize{±1.00} & 45.72\scriptsize{±0.97} & 50.90\scriptsize{±3.33} & 100.00\scriptsize{±0.00} & 46.81\scriptsize{±2.90} & 46.95\scriptsize{±2.01} & 97.05\scriptsize{±0.19} \\
    OGGTL & \underline{80.84\scriptsize{±2.00}} & \underline{79.93\scriptsize{±1.26}} & \underline{66.44\scriptsize{±8.89}} & \underline{58.73\scriptsize{±2.19}} & \textbf{60.47\scriptsize{±1.38}} & \underline{91.46\scriptsize{±2.21}} & \underline{67.59\scriptsize{±7.93}} & \multicolumn{1}{l}{\underline{70.90\scriptsize{±5.80}}} & 83.00\scriptsize{±11.22} & 51.78\scriptsize{±0.19} & 53.71\scriptsize{±0.22} & 96.41\scriptsize{±0.05} \\
    \midrule
    SGOOD & \textbf{84.39\scriptsize{±2.73}} & \textbf{83.32\scriptsize{±2.49}} & \textbf{64.13\scriptsize{±4.83}} & \textbf{61.25\scriptsize{±1.60}} & \underline{59.36\scriptsize{±2.39}} & \textbf{88.04\scriptsize{±3.44}} & \textbf{73.15\scriptsize{±4.48}} & \textbf{73.25\scriptsize{±4.49}} & \textbf{67.40\scriptsize{±5.16}} & \textbf{60.82\scriptsize{±0.75}} & \textbf{59.99\scriptsize{±0.69}} & \textbf{90.39\scriptsize{±1.04}} \\
    \bottomrule
    \end{tabular}
    }
  \label{tab:overall}
  \vspace{-1mm}
\end{table*}

\subsection{Experimental Setup}\label{sec::setup}

\subsubsection{Datasets.} We adopt real-world datasets that encompass diverse types of OOD graphs, as listed in Table \ref{tab:datasets}. These datasets are curated from mainstream graph classification benchmarks~\cite{morris2020tudataset,wu2018moleculenet,ji2022drugood} into OOD detection scenarios.
The OOD graph data is generated following \cite{liu2023good, li2022graphde}.
All ID graphs are randomly split into training, validation, and testing with ratio 8:1:1, following the settings of standard graph classification \cite{ hu2020open,morris2020tudataset}. 
The testing set consists of the same number of ID graphs and  OOD graphs. 

\begin{itemize}[leftmargin=*]
    \item    \textbf{\enzymes} comprises protein networks representing enzymes. 'Non-enzymes' protein networks from the PROTEINS dataset \cite{morris2020tudataset} are introduced as OOD.
    
    \item \textbf{\multi}  includes social networks categorized into three classes. We designate social networks from the \bina dataset \cite{morris2020tudataset}, labeled with `Action', as OOD graphs, noting that the `Action' class is absent in \multi.
    \item \textbf{\bina} regards graphs labeled as `Comedy' and `Sci-Fi' in \multi as OOD graphs, as these classes are absent in \bina.
    \item \textbf{\reddit} is a large-scale social network dataset \cite{yanardag2015deep}. Graphs in the dataset 
 REDDIT-BINARY \cite{yanardag2015deep}, which are social networks with classes different from \reddit, are introduced as OOD.

    \item \textbf{\bace} consists of molecules for property prediction. In our adaptation for OOD detection, we use the training set of the original \bace as ID graphs. OOD graphs are introduced by incorporating graphs from the provided test set, wherein molecules exhibit scaffolds distinct from those present in training set.
    \item \textbf{\bbbp}  is constructed similarly to BACE. OOD graphs have molecular scaffolds different from ID graphs.
    \item \textbf{\drug} is a curated OOD dataset consisting of various molecular graphs. We use the provided curator to generate both ID and OOD graphs that have different protein targets.
    \item \textbf{\hiv} is a large-scale molecular graph dataset  \cite{wu2018moleculenet}. Graphs with scaffolds different from ID graphs are regarded as OOD.
\end{itemize}

\subsubsection{Baselines.}
We compare  with 11 competitors in 3 categories.

\begin{itemize}[leftmargin=*]
    \item \textbf{General OOD detection methods}, including \msp~\cite{hendrycks2016baseline}, \energy~\cite{liu2020energy},  \odin~\cite{liang2017enhancing}, and \md~\cite{lee2018simple}. \msp, \energy, and \odin estimate OOD scores directly from classification logits at test time. \msp  uses the maximum softmax score as OOD score while \energy uses energy function. \odin combines temperature scaling with gradient-based input perturbations to enlarge the differences between OOD and ID samples. \md measures Mahalanobis distance between test samples and ID training data.
    \item \textbf{Graph-level OOD detection methods}, including  \safe~\cite{wu2023energy}, \graphde \cite{li2022graphde}, \good \cite{liu2023good} and \aagod \cite{guo2023data}. \safe incorporates GNNs in the energy model and detects OOD samples using energy scores. In our paper, we use graph labels to directly run the basic version of \safe from its Section 3.1~\cite{wu2023energy}. \graphde is a probabilistic model-based approach developed for debiased learning and OOD detection in graph data. 
    \good is an unsupervised OOD detection method that adopts contrastive learning to capture latent patterns of ID graphs.
    \aagod is a post-hoc framework that adopts an adaptive amplifier to enlarge the gap between OOD and ID graphs.
    
    \item \textbf{Graph-level anomaly detection methods}, 
    including \ocgin\cite{zhao2021using}, \ocgtl \cite{qiu2022raising}, and \glkd \cite{ma2022deep}. 
   \ocgin combines deep one-class classification with GIN~\cite{xu2018powerful} to detect outlier graphs at test time. 
   \ocgtl develops a one-class objective for graph anomaly detection.
   \glkd leverages knowledge distillation to detect both local and global graph anomalies.

\end{itemize}

\subsubsection{Evaluation and Implementation.} 
Following ~\cite{wu2023energy,liu2023good}, 
all methods are trained using ID training set and evaluated their OOD detection performance and ID classification performance in the test set. Hyperparameters are tuned using ID graphs from the validation set. 
All methods are evaluated five times on each dataset, and the reported performance metrics are based on the mean and standard deviation results on the test set.
We use three commonly used metrics  \auc, \pr and \fpr for OOD detection evaluation \cite{hendrycks2016baseline,wu2023energy}.  For the classification performance in ID graphs, we use Accuracy (\acc).
Remark that the priority of the graph-level OOD detection task is to accurately identify OOD graphs, instead of improving the ID ACC. 
For \algo, we set the number of layers $L_1=3$ and $L_2=2$, and dimension $d$ as 16.  We set batch size $128$. Training consists of $100$ epochs for pre-training ($T_{PT}$) and $500$ epochs for fine-tuning ($T_{FT}$). We experiment with learning rates in the range $\{0.01, 0.001, 0.0001\}$ for the initial stage, and set the learning rate to $0.001$ and $\alpha$ to $0.1$ for the refinement stage. For all baselines, we use the suggested parameters from their papers or obtained by grid search. Experiments are conducted on a Linux server equipped with an Nvidia RTX 3090 GPU card.

\subsection{Overall Performance}

\header
\textbf{OOD detection performance.}
Table \ref{tab:overall} reports the overall graph-level OOD detection performance of all methods by \auc, \pr and \fpr metrics on all datasets, by mean and standard deviation values. 
Observe that \algo consistently achieves superior OOD detection effectiveness under most settings. 
For instance, on \multi, \algo has \auc 78.84\%, which indicates $9.58\%$ absolute improvement over the best competitor with \auc 69.26\%. As another example on \bace molecule dataset, the \auc of \algo is 84.39\%, while the runner-up achieves \auc 80.84\%.
The overall results in Table \ref{tab:overall} show that \algo  effectively encode task-agnostic substructures into expressive representations for graph-level OOD detection, validating the power of our technical designs.

\begin{table}[!t]
  \caption{ ID graph classification performance measured by average \acc (in percentage \%).  {/ indicates that \acc is not applicable for  unsupervised  methods.} }  \vspace{-2mm}
  \setlength{\tabcolsep}{1.2pt}
  \resizebox{0.48\textwidth}{!}{
    \begin{tabular}{ccccccccc}
    \toprule
    \multicolumn{1}{l}{Method} & \multicolumn{1}{l}{ENZYMES} & \multicolumn{1}{l}{IMDB-M} & \multicolumn{1}{l}{IMDB-B} & \multicolumn{1}{l}{\reddit} & \multicolumn{1}{l}{BACE} & \multicolumn{1}{l}{BBBP} & \multicolumn{1}{l}{HIV} & \multicolumn{1}{l}{DrugOOD} \\
    \midrule
    MSP   & 37.33 & 48.27 & 69.80 & 48.91 & 80.83 & 87.44 & 96.62 & 79.20 \\
    Energy & 37.33 & 48.27 & 69.80 & 48.91 & 80.83 & 87.44 & 96.62 & 79.20 \\
    ODIN  & 37.33 & 48.27 & 69.80 & 48.91 & 80.83 & 87.44 & 96.62 & 79.20 \\
    \md   & 37.33 & 48.27 & 69.80 & 48.91 & 80.83 & 87.44 & 96.62 & 79.20 \\
    \midrule
    GNNSafe & 17.66 & 30.13 & 50.20 & 27.42 & 56.69 & 79.14 & 96.58 & 64.40 \\
    GraphDE & 46.00 & 37.86 & 69.80 & 40.68 & 77.68 & 88.90 & 96.20 & 77.00 \\
    GOOD-D & /     & /     & /     & /     & /     & /     & /     & / \\
    \aagod & /     & /     & /     & /     & /     & /     & /     & / \\
    \midrule
    SGOOD & \textbf{48.66} & \textbf{48.66} & \textbf{71.60} & \textbf{51.82} & 80.33 & \textbf{89.14} & \textbf{96.66} & \textbf{79.40} \\
    \bottomrule
    \end{tabular}
    }
  \label{tab:acc}
  \vspace{-1mm}
\end{table}

\header
\textbf{ID graph classification performance.}
Table~\ref{tab:acc} reports the performanc on ID graph classification of all methods by Accuracy (\acc). The results of graph-level anomaly detection methods are not reported as \acc is not applicable. {\algo  achieves
the best ID ACC on 7 out of 8 datasets. For instance,  on \enzymes, \algo has \acc 48.66\%, while the \acc of the best competitor \graphde is 46.00\%,
indicating a relative improvement of 5.8\%. On \reddit, a large-scale dataset with 11 ID classes, \algo has \acc 51.82\%, outperforming the best competitor by a relative improvement of 5.9\%.}
The results indicate that leveraging substructures benefits both OOD detection and graph classification.

\begin{table}[!t]

  \centering
  \caption{Ablation  \auc (\%)}
  \vspace{-2mm}
    \setlength{\tabcolsep}{1.5pt}

  \resizebox{0.44\textwidth}{!}{
    \begin{tabular}{ccccccc}
    \toprule
    Method &  {\enzymes} &  {\multi} &  {\bina} & {\bace} &  {\bbbp} &  {\drug} \\
    \midrule
         Best baseline  & 	71.46 & 69.26  & \underline{79.39} & 73.78  & 57.37  & 57.37  \\
         \midrule
          \algo (base)  &  67.38  & 69.26  & 76.80  & 73.78  & 53.77  & 66.90  \\
         $\text{\algo}\backslash\text{A}$  & \underline{73.60}  & \underline{75.22}  & 77.80  & \underline{75.96}  & \underline{57.84}  & \underline{68.80} \\
    \algo &\textbf{74.41} & \textbf{78.84} & \textbf{80.42} & \textbf{84.40} & \textbf{61.25} & \textbf{73.16} \\
    \bottomrule
    \end{tabular}}
    
  \label{tab:ablation}
  \vspace{-2mm}
\end{table}

\begin{figure}[!t]
    
    \label{fig:enter-label}
      
      \includegraphics[width=0.96\linewidth]{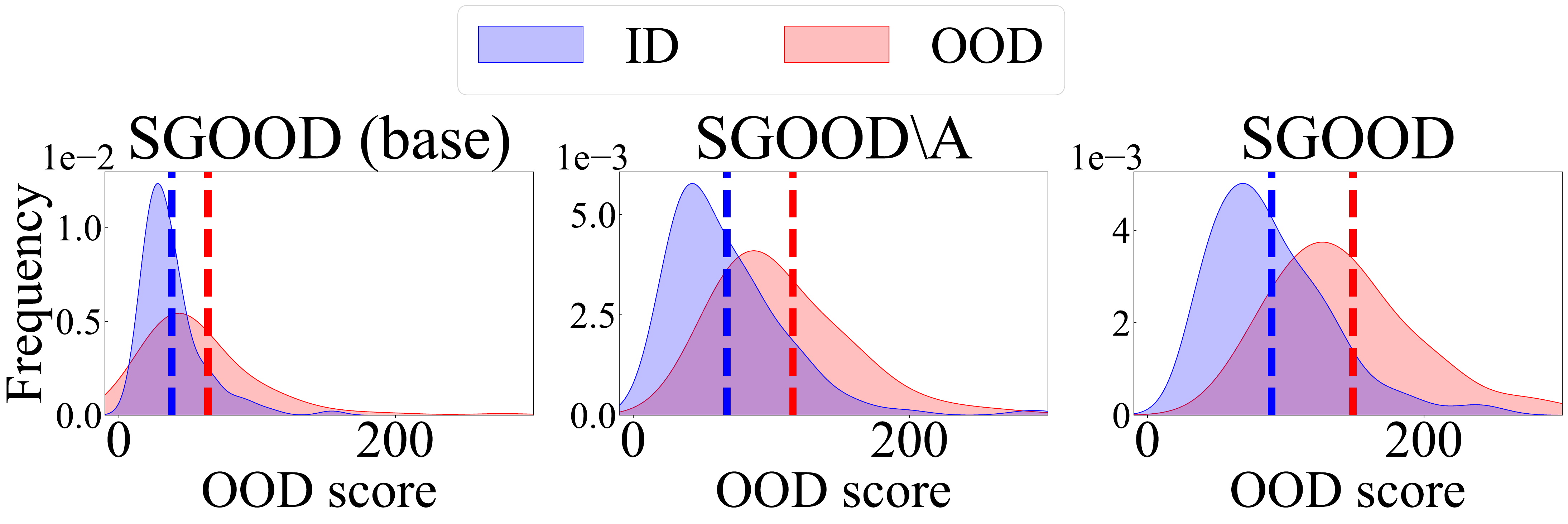}
    
    \vspace{-2mm}
    \caption{\label{fig:distribution}ID and OOD score distributions, with the dotted line indicating the mean of ID/OOD scores. }
    
     \vspace{-2mm}
\end{figure}

\subsection{Model Analysis}

\header
\textbf{Ablation Study.}
In Table \ref{tab:ablation},   $\text{\algo}\backslash\text{A}$ is \algo that ablates all augmentations in Section \ref{sec::augmentations}, \ie $\alpha$=$0$ in Eq. \eqref{eq:loss}; \algo (base) further ablates all substructure-related representations in Section \ref{sec::substructure}.
In Table \ref{tab:ablation}, first observe that, from \algo (base) to $\text{\algo}\backslash\text{A}$ and then to the complete version \algo,
the performance gradually increases on all datasets, validating the effectiveness of all proposed techniques.
Second,  $\text{\algo}\backslash\text{A}$  already surpasses the best baseline performance on most datasets, which demonstrates the effect of the techniques in Section \ref{sec::substructure}, without the augmentation techniques in Section \ref{sec::augmentations}.
With the help of the substructure-preserving graph augmentations, \algo pushes the performance further higher.  
In Figure \ref{fig:distribution}, we visualize the ID and OOD score distributions of \algo (base),  $\text{\algo}\backslash\text{A}$ and \algo on \drug, with their mean scores shown in dotted lines. Clearly, we are obtaining more separable OOD scores against ID data from left to right in Figure \ref{fig:distribution}, which demonstrates that our techniques in \algo can learn distinguishable representations for ID and OOD graphs.

\begin{table}[!t]

  \centering
  \caption{Comparison between different substructure detection methods by \auc (\%).}  \vspace{-2mm}

  \setlength{\tabcolsep}{1.5pt}

  \resizebox{0.47\textwidth}{!}{
    \begin{tabular}{lcccccc}
    \toprule
    
    \algo & \enzymes & \multi & \bina & \bace  & \bbbp  & \drug \\
    \midrule
    \emph{w.o.} substructures &   67.38 & 69.26 & 76.8  & 73.78 & 53.77 & 66.90 \\  
    Modularity  & \textbf{74.41 } & \textbf{78.84 } & \textbf{80.42 } & \textbf{84.40 } & \textbf{61.25 } & \textbf{73.16 }  \\
    Graclus & \underline{71.12}  & 74.64  & \underline{78.86}  & \underline{79.54}  & 56.62  & 67.94  \\
    LP & 68.09  & \underline{75.48}  & 78.46 & 76.63  & 54.90  & \underline{68.95}  \\
    BRICS &   /    &    /   &    /  & 78.39  & \underline{60.18}  & 64.78  \\
    \bottomrule
    \end{tabular}}
  \label{tab:community}
  \vspace{-2mm}
\end{table}

\begin{table}[!t]

  \centering
  \caption{Comparing with subgraph-aware models {\small \auc} (\%). \textbf{Bold}: best. \underline{Underline}: runner-up.}
  \vspace{-2mm}
  \setlength{\tabcolsep}{0.6mm}
  \resizebox{0.42\textwidth}{!}{
    \begin{tabular}{ccccccc}
    \toprule
    \multicolumn{1}{l}{Method} & ENZYMES & IMDB-M & IMDB-B & BACE  & BBBP  & \drug \\

    \midrule
          
    SAG   & 70.40  & 76.50  & 77.30  & \underline{76.90 } & \underline{58.90 } & 65.80  \\
    TopK  & 70.20  & \underline{76.80 } & 77.20  & 74.20  & 54.90  & 58.50  \\
    DiffPool & 73.30  & 75.90  & 78.00  & 76.50  & 57.50  & 70.60  \\
    \ngnn  & 70.30  & 71.20  & 76.60  & 71.20  & 52.60  & \textbf{75.60 } \\
    \gnnak & 68.50  & 73.50  & 77.20  & 70.90  & 54.30  & 63.00  \\
    \midrule
    SGOOD & \textbf{74.41 } & \textbf{78.84 } & \textbf{80.42 } & \textbf{84.40 } & \textbf{61.25 } & \underline{73.16 } \\
    \bottomrule
    \end{tabular}
    }
  \label{tab:subgraph}

\vspace{-2mm}
\end{table}

\header
\textbf{Effect of Task-agnostic Substructure Detection Methods.}
As mentioned, \algo is orthogonal to  specific substructure extraction methods.
Here in \algo, we evaluate several commonly-used methods to extract substructures, including  Graclus \cite{dhillon2007weighted}, label propagation (LP) \cite{cordasco2010community}, and 
BRICS \cite{degen2008art}. BRICS  uses chemistry knowledge  for extraction.
In Table \ref{tab:community}, \algo with different substructure detection methods are all better than \algo \emph{w.o.} substructures, and \algo with Modularity is the best. The results validate the effectiveness of \algo that leverages substructures for OOD detection.

\header
\textbf{Comparison with Task-specific Subgraph Models.}
We then compare \algo directly with subgraph GNN models, including three
hierarchical pooling methods (SAG \cite{lee2019self}, TopK \cite{gao2019graph}, DiffPool \cite{ying2018hierarchical}) and  two subgraph GNNs (\ngnn \cite{zhang2021nested} and \gnnak \cite{zhao2021stars}).
Note that these methods are not specifically designed for graph-level OOD detection.
\commentout{ \zhihao{Therefore, we train them with: 1) classification loss without contrastive learning; 2) classification loss empowered with contrastive learning and graph augmentations \cite{ ding2022data,you2020graph} following the same procedure as \algo.}}
At test time, we extract the graph representations generated by these methods and use Mahalanobis distance as OOD score.
Table \ref{tab:subgraph} reports the \auc results.
\algo performs best on 5 out of 6 datasets and is the top-2 on DrugOOD. This validates the effectiveness of our substructure-related techniques in Section \ref{sec:method} for graph-level OOD detection.

\header
\textbf{Effect of Augmentations.} We evaluate the augmentations (SD, SG, and SS) in Section \ref{sec::augmentations}, with conventional graph augmentations that are not substructure-preserving, 
  including edge perturbation (EP), attribute masking (AM), node dropping (ND), and subgraph sampling (SA).
Table \ref{tab:augmentations} reports the results, AM is not applicable on \multi and \bina  since they do not have node attributes. 
Observe that our SD, SG, and SS are the top-3 ranked techniques for graph-level OOD detection, validating the effectiveness of the proposed substructure-preserving graph augmentations. 
In Appendix  Figure  \ref{fig:heatmap}, we also visualize the improvements of all pairwise combinations of our augmentation techniques.

\header
\textbf{Performance under Different Backbones.}
We evaluate \algo and competitors when changing the GIN backbone to GCN~\cite{kipf2017semi} and GraphSage~\cite{hamilton2017inductive}. Table~\ref{tab:backbone} reports the results.
With GCN backbone, compared with the baselines, \algo consistently achieves the best scores; with GraphSage backbone, \algo is the best on BACE, BBP, DrugOOD, and the second best on other datasets. The results validate the robustness of \algo to different backbones.

\header
\textbf{Model efficiency.}
We compare the training time per epoch in seconds of all methods, with results in Table~\ref{tab:time}. 
Compared with other graph-level OOD detection competitors, including \graphde, \good, and \aagod, \algo requires less time to train. 
Compared with all methods, including the methods originally designed for image data, \algo requires moderate time for training.
Considering together the time cost 
in Table \ref{tab:time} and the effectiveness in Table \ref{tab:overall}, we can conclude
that \algo is effective and efficient for graph-level OOD detection,

\begin{table}[!t]

\centering
  \caption{Comparing with different augmentations by \auc.}
  \vspace{-2mm}
  \setlength{\tabcolsep}{1.5pt}
  \resizebox{0.44\textwidth}{!}{
    \begin{tabular}{lccccccc}
    \toprule
     & \enzymes & \multi  & \bina & \bace  & \bbbp  & \drug & Avg. Rank \\
    \midrule
    EP & 74.28  & 76.50  & 78.44  & 78.75  & 58.24  & 71.32  & 4.67 \\
    AM & 72.44  & /    & /     & 77.28  & 59.68  & 71.27  & 5.25 \\
    ND & 73.11  & 77.09  & 78.40  & 78.79  & 58.59  & 69.48  & 4.83 \\
    SA & 72.12  & 76.76  & 79.25  & 77.13  & 57.84  & \textbf{72.66} & 4.83 \\
    \midrule
    SD & \textbf{74.77} & \textbf{78.15} & 79.54  & 82.00  & 59.76  & 72.65  & \textbf{1.83} \\
    SG & 72.74  & 77.98  & 78.97  & 82.24  & 59.58  & 71.97  & 3.33 \\
    SS & 74.27  & 76.20  & \textbf{80.50} & \textbf{83.53} & \textbf{63.53} & 71.94  & 2.67 \\
    \bottomrule
    \end{tabular}}
  \label{tab:augmentations}
  \vspace{-2MM}
\end{table}

\begin{table}[!t]
  \centering
  \caption{Performance with different
backbones by AUROC (\%).  \textbf{Bold}: best. \underline{Underline}: runner-up.} 
\vspace{-2mm}
    \setlength{\tabcolsep}{1.5pt}
  \resizebox{0.45\textwidth}{!}{
    \begin{tabular}{cccccccc}
    \toprule
    Backbone & Method & ENZYMES & IMDB-M & IMDB-B & BACE  & BBBP  & DrugOOD \\
    \midrule
    \multirow{5}[2]{*}{GCN} & \md & {\underline{70.04}}  & \underline{71.27}  & 53.46  & \underline{72.68}  & 54.97  & \underline{66.01}  \\
          & GraphDE & 61.40  & 68.44  & 29.13  & 53.24  & 52.50  & 56.61  \\
          & GOOD-D & 41.96  & 61.71  & 59.53  & 72.52  & \underline{58.91}  & 61.79  \\
          & OCGIN & 64.35  & 57.46  & \underline{64.08}  & 67.54  & 51.23  & 59.30  \\
          & SGOOD & \textbf{71.26 } & \textbf{73.52 } & \textbf{65.91 } & \textbf{83.42 } & \textbf{62.76 } & \textbf{72.52 } \\
    \midrule
    \multirow{5}[2]{*}{GraphSage} & \md & 68.07  & 48.06  & 43.63  & \underline{73.60}  & 53.88  & \underline{64.55}  \\
          & GraphDE & 61.37  & \textbf{69.65 } & 28.28  & 53.24  & 52.50  & 56.66  \\
          & GOOD-D & 45.55  & 57.02  & 23.90  & 73.15  & \underline{56.85}  & 61.57  \\
          & OCGIN & \textbf{71.75 } & 36.86  & \textbf{71.44 } & 57.47  & 46.65  & 63.82  \\
          & SGOOD & \underline{70.21}  & \underline{68.63}  & \underline{61.59}  & \textbf{82.22 } & \textbf{59.50 } & \textbf{68.60 } \\
    \bottomrule
    \end{tabular}
    }
  \label{tab:backbone}
  \vspace{-4mm}
\end{table}

\subsection{Parameter Sensitivity}
In this section, we evaluate the performance of \algo under varying hyperparameters to test its robustness and sensitivity to parameter settings.

\commentout{In Table \ref{tab:ablation} reports the performance of \algo with $\mathcal{L}_{CL}$ ablated ($\text{\algo}\backslash\text{A}$), in which the completed \algo with contrastive learning is consistently better for graph-level OOD detection.}
\commentout{\zhihao{\algo (base) indicates the baseline version of \algo  without using techniques proposed by us.}}

\begin{table}[!t]
  \centering
  \caption{Training time per epoch of all methods on all datasets by seconds (s).}
  \setlength{\tabcolsep}{0.9pt}
  \resizebox{0.95\linewidth}{!}{
    \begin{tabular}{ccccccccc}
    \toprule
   Method & ENZYMES & IMDB-M & IMDB-B & REDDIT-12K & BACE  & BBBP  & HIV   & DrugOOD \\
    \midrule
    \msp  & 0.119  & 0.077  & 0.090  & 0.890  & 0.053  & 0.055  & 2.740  & 0.078  \\
    \energy & 0.119  & 0.077  & 0.090  & 0.890  & 0.053  & 0.055  & 2.740  & 0.078  \\
    \odin & 0.119  & 0.077  & 0.090  & 0.890  & 0.053  & 0.055  & 2.740  & 0.078  \\
    \md   & 0.119  & 0.077  & 0.090  & 0.890  & 0.053  & 0.055  & 2.740  & 0.078  \\
    \midrule
    \graphde & 1.692  & 1.175  & 1.392  & 176.400  & 0.950  & 0.696  & 43.770  & 1.020  \\
    \good & 0.257  & 0.171  & 0.197  & 17.550  & 0.157  & 0.095  & 5.160  & 0.230  \\
    \aagod & 0.234  & 0.269  & 0.404 & 5.927 & 0.380  & 0.681 & 14.000 & 0.310  \\
    \midrule
    \ocgin & 0.123  & 0.079  & 0.086  & 1.650  & 0.075  & 0.044  & 2.900  & 0.099  \\
    \glkd & 0.072  & 0.203  & 0.054  & 142.000  & 0.052  & 0.035  & 4.220  & 0.067  \\
    \algo & 0.161  & 0.138  & 0.137  & 0.980  & 0.085  & 0.058  & 3.970  & 0.124  \\
    \bottomrule
    \end{tabular}
    }
  \label{tab:time}
  
\end{table}

\begin{figure}[!t] 

    \centering 
    \includegraphics[width=0.45\textwidth]{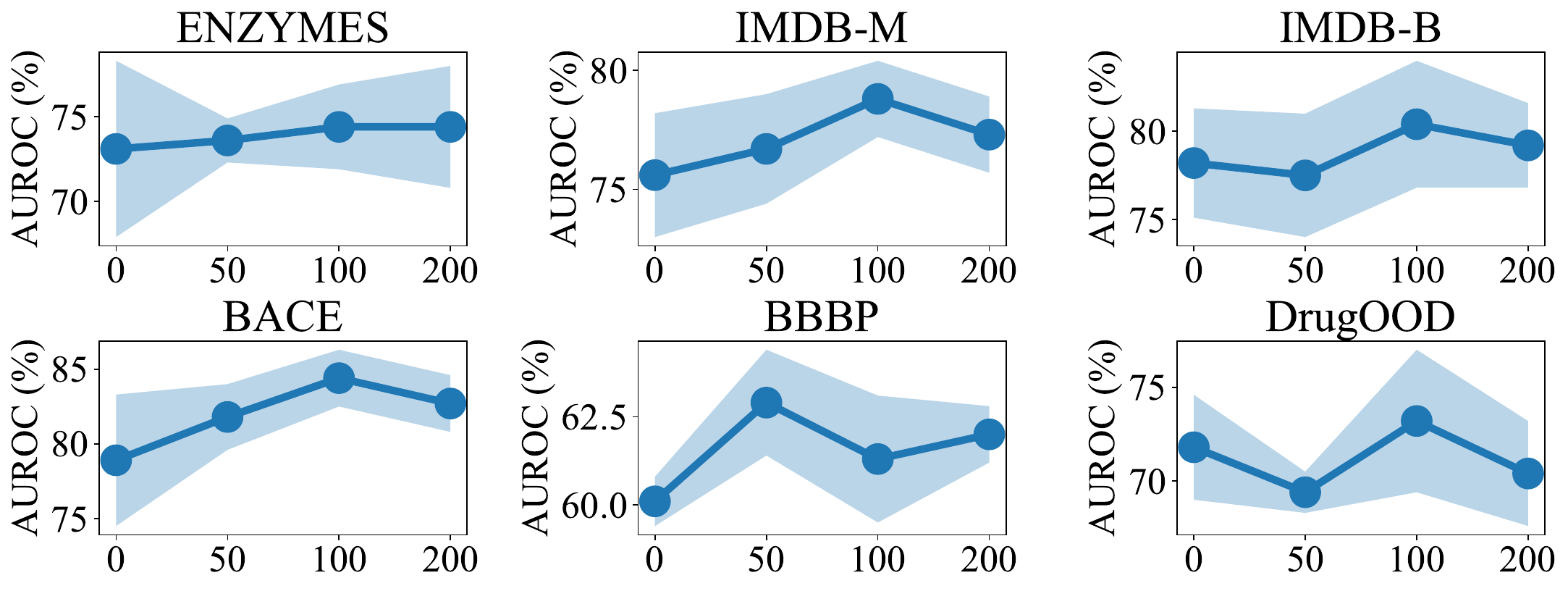}
    
    \caption{OOD detection performance  of \algo by \auc (\%) when the number of pretraining epochs $T_{PT}$ varies from 0 to 200, with colored area representing standard deviation.}
    \label{fig:epochs} 

\end{figure}

\header
\textbf{Varying pretraining epochs $T_{PT}$.}
 We conduct experiments to study the effect of pretraining epochs $T_{PT}$ from 0 to 200. As shown in Figure~\ref{fig:epochs}, compared to \algo without first-stage pretraining ($T_{PT}=0$), pretraining  improves \algo's performance. We also found that excessive pretraining can sometimes have negative effects. For example, when $T_{PT}=200$, \algo's performance decrease  on all datasets except \enzymes. We speculate the reason is that excessive pretraining makes task-agnostic information dominate, with a negative impact on the \algo's ability to learn from class labels. As $T_{PT}=100$ generally leads to competitive performance across all datasets, 
 we set the default value of  $T_{PT}$ to 100.

\begin{figure}[!t] 

    \centering 
    \includegraphics[width=0.45\textwidth]{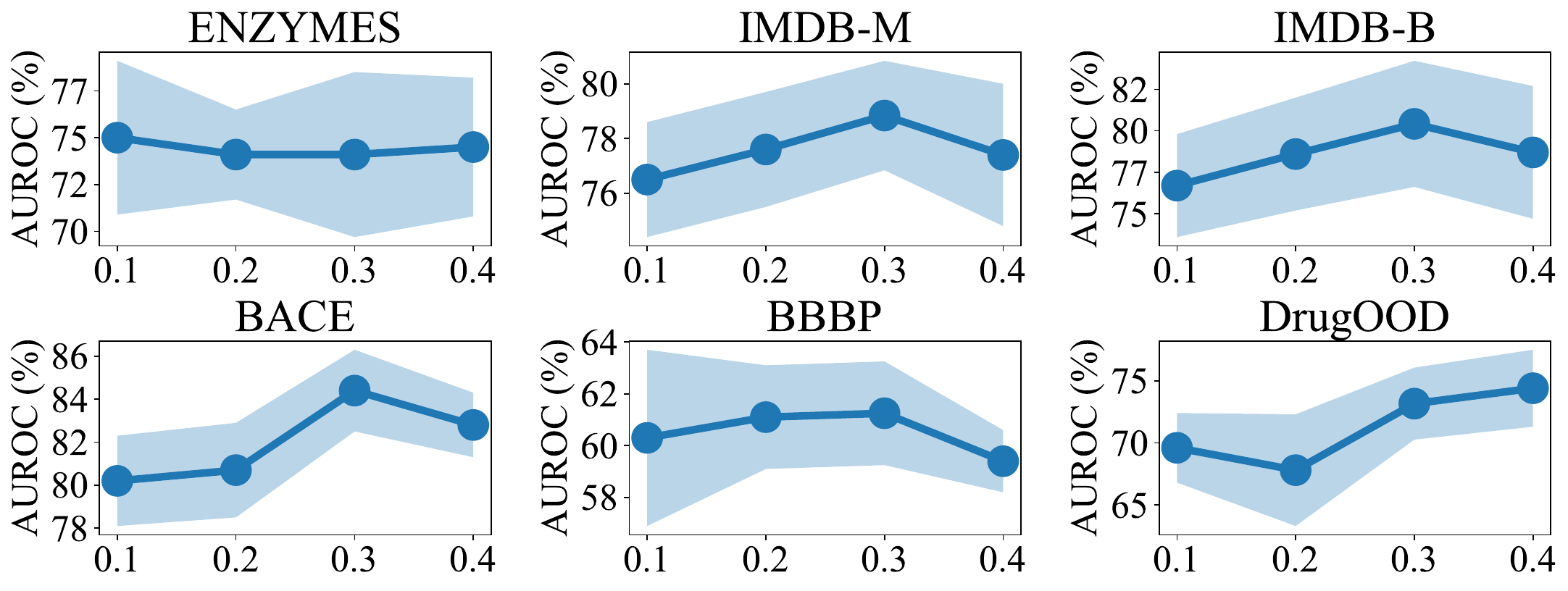}
    
    \caption{OOD detection results of \algo by \auc (\%) when the weight of the contrastive loss $\alpha$ varies from 0 to 1, with the colored area representing standard deviation.}
    \label{fig:weight} 
    \vspace{-2mm}
\end{figure}

\header
\textbf{Varying the weight of contrastive loss  $\alpha$.} We vary $\alpha$ from 0 to 1 to study the effect. As shown in Figure~\ref{fig:weight}, compared to \algo fine-tuned solely by $\mathcal{L}_{CE}$ (\ie $\alpha=0$), fine-tuning \algo with both $\mathcal{L}_{CE}$ and $\mathcal{L}_{CL}$ generally leads to better performance. 
As $\alpha = 0.1$ usually leads to competitive performance across all datasets, we set the default value of  $\alpha$ to 0.1 in \algo.

\begin{figure}[!t] 

    \centering 
    \includegraphics[width=0.45\textwidth]{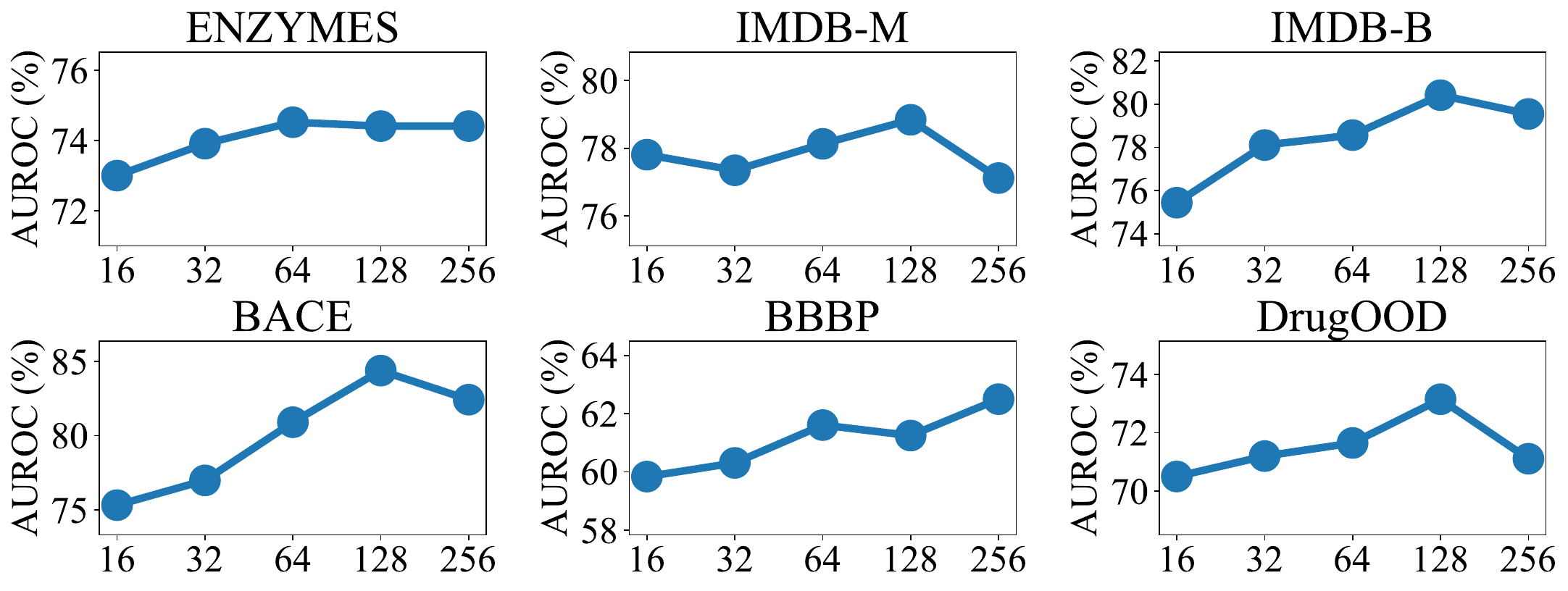}
    \vspace{-2mm}
    \caption{  OOD detection results  of \algo  by \auc (\%) when the batch size $B$ varies from 16 to 256.}
    \label{fig:batch} 
    \vspace{-3mm}
\end{figure}

\begin{figure}[!t] 

    \centering 
    \includegraphics[width=0.45\textwidth]{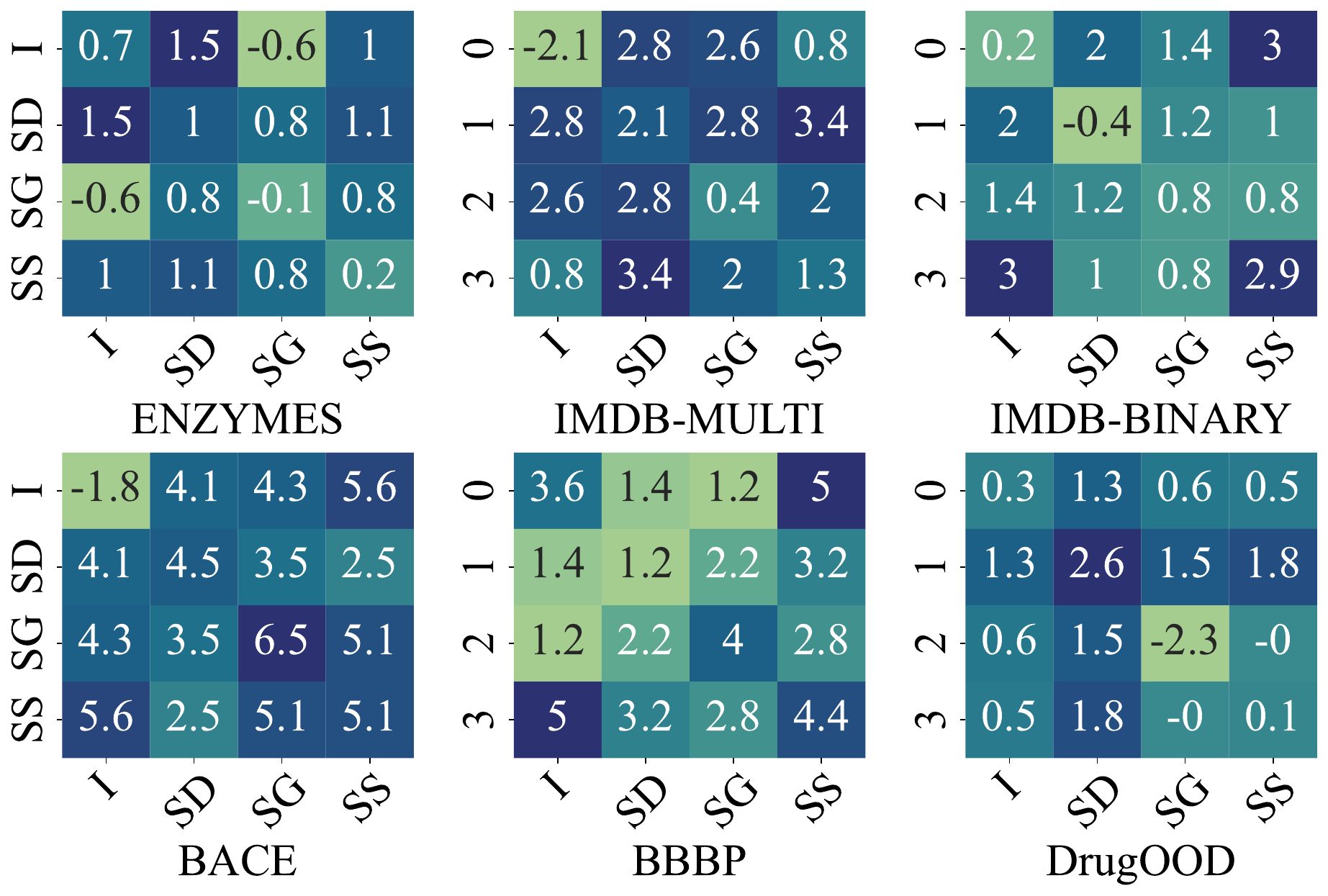}
    
      \vspace{-2mm}
    \caption{\auc gain (\%) of \algo  compared with $\text{\algo}\backslash\text{A}$ without graph augmentations.}
    \label{fig:heatmap} 
    \vspace{-2mm}
\end{figure}

\header
\textbf{Varying number of negative samples in $\mathcal{L}_{CL}$. } We conduct experiments to study the effect of the number of  negative samples used in  contrastive loss $\mathcal{L}_{CL}$ (Eq.\eqref{eq:lossCL}). Following the established convention in graph contrastive learning~\cite{you2020graph}, pairs of augmented graphs originating from the same graph are treated as positive pairs, while pairs generated from different graphs within the batch are considered negative pairs. In such a way, in a $B$-size batch, for every $G_i$, it will have $2B-2$ negative samples, as shown in the denominator of Eq.\eqref{eq:lossCL}. Apparently the number of negative samples is related to batch size $B$. We vary $B$ from 16 to 256 to evaluate sensitivity of SGOOD w.r.t. the number of negative samples, and report the results in  Figure~\ref{fig:batch} . Observe that as increasing from 16 to 128, the overall performance increases and then becomes relatively stable, which proves the effectiveness of the augmentation techniques developed in \algo and also validates the superior performance of \algo when varying  batch size and the number of negative samples.

\begin{table}[!t]
\centering
  \caption{Varying $L_{1}$ and $L_{2}$ in \algo (\auc).}
    \vspace{-1mm}
    \setlength{\tabcolsep}{3pt}

  \resizebox{0.45\textwidth}{!}{
    \begin{tabular}{ccccccccc}
    \toprule
    $L_{1}$ & $L_{2}$ & \enzymes & \multi & \bina & \bbbp  & \bace  & \drug \\
    \midrule
    4     & 1     & 74.00  & 77.13  & \textbf{81.00 } & 80.43  & 62.00  & 71.17  \\
    3     & 2     & \textbf{74.41 } & \textbf{78.84 } & 80.42  & \textbf{84.40 } & 61.25  & \textbf{73.16 } \\
    2     & 3     & 73.63  & 76.03  & 79.05  & 80.34  & \textbf{62.26 } & 69.12  \\
    1     & 4     & 74.22  & 77.83  & 76.79  & 76.62  & 61.08  & 68.01  \\
    \bottomrule
    \end{tabular}
  \label{tab:layers}
  }
\vspace{-2mm}
\end{table}

\header
\textbf{Varying different combinations of augmentations.}
In Figure \ref{fig:heatmap}, we exhaust the pairwise combinations of all options in $\mathcal{A}=\{\text{\underline{I, SD, SG, SS}}\}$ and visualize the \auc gain on graph-level OOD detection over $\text{\algo}\backslash\text{A}$ without graph augmentations. As shown in Figure \ref{fig:heatmap}, most combinations achieve positive gains for effective OOD detection.

\header
\textbf{Varying $L_1$ and $L2$.} 
In the experiments above, we fix the layers of the two GINs in the two-level graph encoding in Section \ref{sec::substructure} to be $L_1=3$ and $L_2=2$ as default. If we search $L_1$ and $L_2$, it is possible to get even better OOD detection results, as shown  in Table \ref{tab:layers} where $L_1$ and $L_2$ are varied with their sum fixed to be 5. For example, on BACE with $L_1$=$2$ and $L_2$=$3$, \auc is 62.26\%, about 1\% higher than the default setting.

\section{Related Work}\label{sec:related}

\header \textbf{Graph-level Representation Learning.}
Graph-level representation learning aims to learn representations of entire graphs \cite{wu2020comprehensive}.  GNNs \cite{hamilton2017inductive,kipf2017semi,velivckovic2017graph,xu2018powerful} are often adopted  \cite{guo2022graph, yang2022learning} to first learn node representations by   message passing on graphs, and then node representations are aggregated by flat pooling functions to get graph-level representations \cite{xu2018powerful}.
{However, these traditional methods have limitations in capturing high-order structures with crucial semantics for graph-level tasks, \eg functional groups in molecules~\cite{ying2018hierarchical}.
Hence, there exist methods to leverage subgraphs}, \eg hierarchical pooling \cite{gao2019graph,lee2019self,ying2018hierarchical} and subgraph GNNs \cite{zhang2021nested,zhao2021stars}. 
Hierarchical pooling methods learn to assign nodes into different clusters and coarsen graphs hierarchically. Subgraph GNNs apply message passing on extracted rooted-subgraphs of nodes in a graph, and then aggregate subgraph representations \cite{frasca2022understanding}.
These methods are not designed for graph OOD detection, and they assume that graphs are i.i.d in training and testing and learn task-specific substructures. In this paper, we explore leveraging task-agnostic substructures to learn expressive graph representations for OOD detection.

\header \textbf{Graph Out-of-distribution Detection.}
{Out-of-Distribution (OOD) detection has recently received considerable research attention on graph data.}
\cite{wu2023energy} explore node-level OOD detection by using energy function to detect  OOD nodes in a graph, which is a different problem from this paper. For graph-level OOD detection, \cite{li2022graphde} design a generative model that has the ability to identify outliers in training graph samples, as well as OOD samples during the testing stage.   \cite{liu2023good} develop a self-supervised learning approach to train their model to estimate OOD scores at test time. 
{Recently, \cite{zhang2022dual} proposes to learn anomalous substructures using deep random walk kernel, which depends on labeled anomalous graphs, while OOD graphs are unseen during the training stage and only available during the testing stage.} Instead of training GNNs for OOD detection, \aagod~\cite{guo2023data} develops an adaptive amplifier that modifies the graph structure to enlarge the gap between OOD and ID graphs.
Observe that existing graph-level OOD detection methods mainly leverage node representations output by GNNs \cite{kipf2017semi, velivckovic2017graph,xu2018powerful,ZhouSZH023} to get graph-level representations, while the rich substructure patterns hidden in graphs are under-investigated for graph-level OOD detection. 
On the other hand, our method \algo explicitly uses substructures in graphs to learn high-quality representations for effective graph-level OOD detection.

\section{Conclusion}
We study the problem of graph-level OOD detection, and present a novel \algo method with superior performance.
The design of \algo is 
motivated by the exciting finding that substructure differences commonly exist between ID and OOD graphs, and \algo aims to preserve more distinguishable graph-level representations between ID and OOD graphs.
Specifically, we build a super graph of substructures for every graph, and develop a two-level graph encoding pipeline to obtain high-quality structure-enhanced graph representations.
We further develop a set of substructure-preserving graph augmentations. 
Extensive experiments on real-world graph datasets with various OOD types validate the superior performance of \algo over existing methods for graph-level OOD detection.
\bibliographystyle{ACM-Reference-Format}
\bibliography{biblio}

\end{document}